\def\eqref#1{equation~\ref{#1}}
\def\1{\bm{1}}
\DeclareMathAlphabet{\mathsfit}{\encodingdefault}{\sfdefault}{m}{sl}
\SetMathAlphabet{\mathsfit}{bold}{\encodingdefault}{\sfdefault}{bx}{n}
\newcommand{\R}{\mathbb{R}}
\newtheorem{theorem}{Theorem}[section]
\newtheorem{corollary}{Corollary}[theorem]
\newtheorem{lemma}[theorem]{Lemma}
\newtheorem{prediction}[theorem]{Prediction}
\newtheorem{remark}[theorem]{Remark}
\newtheorem{example}[theorem]{Example}
\newtheorem{proposition}[theorem]{Proposition}
\newcommand{\N}[1]{{\color{black}#1}}
\title{Implicit Gradient Regularization}
\author{David G.T. Barrett\thanks{equal contribution}\\ 
  DeepMind\\
  London \\  
  \texttt{barrettdavid@google.com} \\
  \And
  Benoit Dherin\footnotemark[1]\\
  Google \\
  Dublin \\ 
  \texttt{dherin@google.com} \\
}
\begin{document}

\maketitle

\begin{abstract}

Gradient descent can be surprisingly good at optimizing deep neural networks without overfitting and without explicit regularization. We find that the discrete steps of gradient descent implicitly regularize models by penalizing gradient descent trajectories that have large loss gradients. We call this \emph{Implicit Gradient Regularization} (IGR) and we use backward error analysis to calculate the size of this regularization. We confirm empirically that implicit gradient regularization biases gradient descent toward flat minima, where test errors are small and solutions are robust to noisy parameter perturbations. Furthermore, we demonstrate that the implicit gradient regularization term can be used as an explicit regularizer, allowing us to control this gradient regularization directly. More broadly, our work indicates that backward error analysis is a useful theoretical approach to the perennial question of how learning rate, model size, and parameter regularization interact to determine the properties of overparameterized models optimized with gradient descent.

\end{abstract}

\section{Introduction}
\label{sec:intro}

The loss surface of a deep neural network is a mountainous terrain - highly non-convex with a multitude of peaks, plateaus and valleys \citep{visualization, lessons}. Gradient descent provides a path through this landscape, taking discrete steps in the direction of steepest descent toward a sub-manifold of minima. However, this simple strategy can be just as hazardous as it sounds. For small learning rates, our model is likely to get stuck at the local minima closest to the starting point, which is unlikely to be the most desirable destination. For large learning rates, we run the risk of ricocheting between peaks and diverging. However, for moderate learning rates, gradient descent seems to move away from the closest local minima and move toward flatter regions where test data errors are often smaller \citep{broadminima, catapult, regeffect}. This phenomenon becomes stronger for larger networks, which also tend to have a smaller test error \citep{factorization, biastradeoff, scaling, interpolate, implicit_bias}. In addition, models with low test errors are more robust to  parameter perturbations \citep{morcos2018importance}. Overall, these observations contribute to an emerging view that there is some form of implicit regularization in gradient descent and several sources of implicit regularization have been identified.

We have found a surprising form of implicit regularization hidden within the discrete numerical flow of gradient descent. Gradient descent iterates in discrete steps along the gradient of the loss, so after each step it actually steps off the exact continuous path that minimizes the loss at each point. Instead of following a trajectory down the steepest local gradient, gradient descent follows a shallower path. We show that this trajectory is closer to an exact path along a modified loss surface, which can be calculated using backward error analysis from numerical integration theory \citep{GNI}. Our core idea is that the discrepancy between the original loss surface and this modified loss surface is a form of implicit regularization (Theorem \ref{main_thm}, Section \ref{sec:backward_analysis}). 

We begin by calculating the discrepancy between the modified loss and the original loss using backward error analysis and find that it is proportional to the second moment of the loss gradients, which we call \emph{Implicit Gradient Regularization} (IGR). Using differential geometry, we show that IGR is also proportional to the square of the loss surface slope, indicating that it encourages optimization paths with shallower slopes and optima discovery in flatter regions of the loss surface. Next, we explore the properties of this regularization in deep neural networks such as MLP's trained to classify MNIST digits and ResNets trained to classify CIFAR-10 images and in a tractable two-parameter model. In these cases, we verify that IGR effectively encourages models toward minima in the vicinity of small gradient values, in flatter regions with shallower slopes, and that these minima have low test error, consistent with previous observations. We find that IGR can account for the observation that learning rate size is correlated with test accuracy and model robustness. Finally, we demonstrate that IGR can be used as an explicit regularizer, allowing us to directly strengthen this regularization beyond the maximum possible implicit gradient regularization strength.

\section{The modified loss landscape induced by gradient descent}

The general goal of gradient descent is to find a weight vector $\hat \theta$ in parameter space $\R^m$ that minimizes a loss $E(\theta)$. Gradient descent proceeds by iteratively updating the model weights with learning rate $h$ in the direction of the steepest loss gradient:
\begin{equation}\label{GD_update}
    \theta_{n+1} = \theta_n - h \nabla_\theta E(\theta_n)
\end{equation}
Now, even though gradient descent takes steps in the direction of the steepest loss gradient, it does not stay on the exact continuous path of the steepest loss gradient, because each iteration steps off the exact continuous path. Instead, we show that gradient descent follows a path that is closer to the exact continuous path given by $\dot{\theta} = - \nabla_{\theta} \widetilde{E}(\theta) $, along a modified loss $\widetilde{E}(\theta)$, which can be calculated analytically using backward error analysis (see Theorem \ref{main_thm} and Section \ref{sec:backward_analysis}), yielding:
\begin{equation}\label{modified_loss}
    \widetilde{E}(\theta) = E(\theta) + \lambda R_{IG}(\theta),
\end{equation}
where
\begin{equation}\label{regrate}
    \lambda \equiv \frac{hm}4
\end{equation}
and 
\begin{equation}\label{regterm}
    R_{IG}(\theta)  \equiv \frac 1m \sum_{i=1}^m \left(\nabla_{\theta_i} E(\theta)\right)^2
\end{equation}
Immediately, we see that this modified loss is composed of the original training loss $E(\theta)$ and an additional term, which we interpret as a regularizer $R_{IG}(\theta)$ with regularization rate $\lambda$. We call $R_{IG}(\theta)$ the \emph{implicit gradient regularizer} because it penalizes regions of the loss landscape that have large gradient values, and because it is implicit in gradient descent, rather than being explicitly added to our loss. 

{\bf Definition.} \textit{Implicit gradient regularization is the implicit regularisation behaviour originating from the use of discrete update steps in gradient descent, as characterized by Equation \ref{modified_loss}.}

We can now make several predictions about IGR which we will explore in experiments:

\begin{prediction}
\label{con:IGR_rate}
IGR encourages smaller values of $R_{IG}(\theta)$ relative to the loss $E(\theta)$.
\end{prediction} 
Given Equation \ref{modified_loss} and Theorem \ref{main_thm}, we expect gradient descent to follow trajectories that have relatively small values of $R_{IG}(\theta)$. It is already well known that gradient descent converges by reducing the loss gradient so it is important to note that this prediction describes the \emph{relative} size of $R_{IG}(\theta)$ along the trajectory of gradient descent. To expose this phenomena in experiments, great care must be taken when comparing different gradient descent trajectories. For instance, in our deep learning experiments, we compare models at the iteration time of maximum test accuracy (and we consider other controls in the appendix), which is an important time point for practical applications and is not trivially determined by the speed of learning (Figures \ref{fig:linear}, \ref{fig:Regularization constant}). Also, related to this, since the regularization rate $\lambda$ is proportional to the learning rate $h$ and network size $m$ (Equation \ref{regrate}), we expect that larger models and larger learning rates will encourage smaller values of $R_{IG}(\theta)$ (Figure \ref{fig:Regularization constant}).
\begin{prediction}
\label{con:IGR_flat_minima}
IGR encourages the discovery of flatter optima.
\end{prediction}
In section \ref{sec:geometry} we will show that $R_{IG}(\theta)$ is proportional to the square of the loss surface slope. Given this and Prediction \ref{con:IGR_rate}, we expect that IGR will guide gradient descent along paths with shallower loss surface slopes, thereby encouraging the discovery of flatter, broader optima. Of course, it is possible to construct loss surfaces at odds with this (such as a Mexican-hat loss surface, where all minima are equally flat). However, we will provide experimental support for this using loss surfaces that are of widespread interest in deep learning, such as MLPs trained on MNIST (Figure \ref{fig:linear}, \ref{fig:Regularization constant}, \ref{fig:robustness}). 

\begin{prediction}
\label{con:test_error}
IGR encourages higher test accuracy.
\end{prediction}
Given Prediction \ref{con:IGR_flat_minima}, we predict that IGR encourages higher test accuracy since flatter minima are known empirically to coincide with higher test accuracy (Figure \ref{fig:Regularization constant}).

\begin{prediction}
\label{con:robustness}
IGR encourages the discovery of optima that are more robust to parameter perturbations.
\end{prediction}

There are several important observations to make about the properties of IGR: 1) It does not originate in any specific model architecture or initialization, although our analysis does provide a formula to explain the influence of these model properties \emph{through} IGR; 2) Other sources of implicit regularization also have an impact on learning, alongside IGR, and the relative importance of these contributions will likely depend on model architecture and initialization; 3) In defining $\lambda$ and $R_{IG}$ we chose to set $\lambda$ proportional to the number of parameters $m$. To support this choice, we demonstrate in experiments that the test accuracy is controlled by the IGR rate $\lambda$. 4) The modified loss and the original loss share the same global minima, so IGR vanishes when the gradient vanishes. Despite this, the presence of IGR has an impact on learning since it changes the \emph{trajectory} of gradient descent, and in over-parameterized models this can cause the final parameters to reach different solutions. 5) Our theoretical results are derived for full-batch gradient descent, which allows us to isolate the source of implicit regularisation from the stochasticity of stochastic gradient descent (SGD). Extending our theoretical results to SGD is considerably more complicated, and as such, is beyond the scope of this paper. However, in some of our experiments, we will demonstrate that IGR persists in SGD, which is especially important for deep learning. Next, we will provide a proof for Theorem \ref{main_thm}, and we will provide experimental support for our predictions. 

\section{Backward error analysis of gradient descent}
\label{sec:backward_analysis}
\label{sec:geometry}
In this section, we show that gradient descent follows the gradient flow of the modified loss $\widetilde{E}$ (Equation \ref{modified_loss}) more closely than that of the original loss $E$. The argument is a standard argument from the backward error analysis of Runge-Kutta methods \citep{GNI}. 
We begin by observing that gradient descent (Equation \ref{GD_update}) can be interpreted as a Runge-Kutta method numerically integrating the following ODE:
\begin{equation}\label{GD_ODE}
    \dot\theta = - \nabla_\theta E (\theta)
\end{equation}
In the language of numerical analysis, gradient descent is the explicit Euler method numerically integrating the vector field $f(\theta) = - \nabla E(\theta)$. 
The explicit Euler method is of order 1, which means that after one gradient descent step $\theta_n = \theta_{n-1} - h\nabla E(\theta_{n-1})$, the deviation from the gradient flow $\| \theta_n - \theta(h) \|$ is of order $\mathcal O(h^2)$, where $\theta(h)$ is the solution of Equation \ref{GD_ODE} starting at $\theta_{n-1}$ and evaluated at time $h$.
Backward error analysis was developed to deal with this discrepancy between the discrete steps of a Runge-Kutta method and the continuous exact solutions (or flow) of a differential equation. The main idea is to modify the ODE vector field $\dot \theta = f(\theta)$ with corrections in powers of the step size
\begin{equation}\label{modified_VF}
    \tilde f(\theta) = f(\theta) + h f_1(\theta) + h^2 f_2(\theta) + \cdots
\end{equation}
so that the numerical steps $\theta_n$ approximating the original Equation \ref{GD_ODE} now lie exactly on the solutions of the \emph{modified equation} $\dot \theta = \tilde f(\theta)$. In other words, backward error analysis finds the corrections $f_i$ in Equation \ref{modified_VF} such that $\theta_n = \tilde\theta(nh)$ for all $n$, where $\tilde \theta(t)$ is the solution of the modified equation starting at $\theta_0$. In theory, we can now precisely study the flow of the modified equation to infer properties of the numerical method because its steps follow the modified differential equation solutions perfectly in a formal sense. 
The following result is a direct application of backward error analysis to gradient descent: 

\begin{theorem}\label{main_thm}
Let $E$ be a sufficiently differentiable function on a parameter space $\theta \in \mathbb R^m$. The modified equation for gradient flow (Equation \ref{GD_ODE}) is of the form 
\begin{equation}\label{modeq_gd}
\dot \theta = - \nabla \widetilde{E}(\theta) + \mathcal O(h^2) 
\end{equation}
where $\widetilde{E} = E + \lambda R_{IG}$ is the modified loss introduced in Equation \ref{modified_loss}. 
Consider gradient flow with the modified loss $\dot \theta = -\nabla \widetilde{E}(\theta)$ and its solution $\tilde \theta(t)$ starting at $\theta_{n-1}$. 
Now the local error $\| \theta_n - \tilde \theta(h) \|$ between $\tilde \theta(h)$ and one step of gradient descent $\theta_n = \theta_{n-1} - h\nabla E(\theta_{n-1})$ is of order $\mathcal O(h^3)$, while it is of order $\mathcal O(h^2)$ for gradient flow with the original loss.
\end{theorem}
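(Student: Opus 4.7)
The plan is to apply the standard backward-error-analysis recipe for Runge–Kutta schemes to the explicit Euler iterate $\theta_n = \theta_{n-1} + h f(\theta_{n-1})$ with vector field $f = -\nabla E$. I would posit a formal correction series $\tilde f(\theta) = f(\theta) + h f_1(\theta) + h^2 f_2(\theta) + \cdots$ for the modified vector field, and impose the requirement that one Euler step from $\theta_{n-1}$ agree with the exact flow of $\dot\theta = \tilde f(\theta)$ up to an error of order $h^3$. Taylor-expanding the exact flow gives
\[
\tilde\theta(h) = \theta_{n-1} + h\,\tilde f(\theta_{n-1}) + \tfrac{h^2}{2}\, D\tilde f(\theta_{n-1})\,\tilde f(\theta_{n-1}) + \mathcal{O}(h^3),
\]
which I would equate to $\theta_{n-1}+hf(\theta_{n-1})$ and match order by order in $h$.

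The order-$h^0$ and order-$h^1$ comparisons hold automatically once one sets the zeroth-order part of $\tilde f$ equal to $f$. The order-$h^2$ comparison then forces $f_1 + \tfrac12\,Df\cdot f = 0$, i.e.\ $f_1 = -\tfrac12\,Df\cdot f$. Substituting $f = -\nabla E$ and $Df = -\nabla^2 E$ yields $f_1 = -\tfrac12\,\nabla^2 E\,\nabla E$. The key algebraic observation is that, by the symmetry of the Hessian,
\[
\nabla\bigl\|\nabla E\bigr\|^2 = 2\,\nabla^2 E\,\nabla E,
\]
so $f_1 = -\tfrac14\,\nabla\|\nabla E\|^2$ is itself a gradient, and
\[
\tilde f(\theta) = -\nabla E(\theta) - \tfrac{h}{4}\,\nabla\|\nabla E(\theta)\|^2 + \mathcal{O}(h^2) = -\nabla\widetilde E(\theta) + \mathcal{O}(h^2),
\]
with $\widetilde E = E + \lambda R_{IG}$, $\lambda = hm/4$ and $R_{IG} = \tfrac1m\|\nabla E\|^2$, matching Equation \ref{modified_loss}.

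The local-error claim then follows from the standard backward-error principle: having chosen $f_1$ so that it cancels the order-$h^2$ mismatch between the Euler step and the flow of the truncated modified ODE $\dot\theta = -\nabla \widetilde E$, the remaining discrepancy is the next term in the expansion, of order $\mathcal{O}(h^3)$. The comparison with the original gradient flow retains the $\mathcal{O}(h^2)$ term because that cancellation is absent, recovering the usual first-order consistency of explicit Euler.

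I do not expect a deep conceptual obstacle, since this is a textbook application of backward error analysis truncated at the first correction. The main difficulty is bookkeeping: tracking the three factors of $\tfrac12$ (one from the Taylor expansion of the flow, one from solving the $h^2$ matching equation, and one from the Hessian identity) that combine into the $\tfrac14$ in $\lambda R_{IG}$, and confirming that the residual $h^2 f_2 + \cdots$ is consistently absorbed into the $\mathcal{O}(h^2)$ remainder in (\ref{modeq_gd}) without affecting the leading gradient structure.
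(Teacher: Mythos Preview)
Your proposal is correct and follows essentially the same approach as the paper: posit a modified vector field $\tilde f = f + hf_1 + \cdots$, Taylor-expand the exact flow to second order, match against the Euler step to obtain $f_1 = -\tfrac12 Df\cdot f$, and then use the Hessian identity $\nabla\|\nabla E\|^2 = 2\nabla^2 E\,\nabla E$ to recognize $f_1$ as a gradient. The local-error argument is also the same, and your bookkeeping remark about the three factors of $\tfrac12$ is accurate.
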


\begin{proof}
We begin by computing $f_1$ for which the first two orders in $h$ of the Taylor's Series of the modified equation solution $\theta(t)$ at $t=h$ coincide with one gradient descent step. Since $\theta'(t) = \tilde f(\theta)$, we see that $\theta''(t) = \tilde f'(\theta) \tilde f(\theta)$ and we find that 
$$
\theta + h f(\theta) = \theta + h f(\theta) + h^2\big(f_1(\theta) + \frac12 f'(\theta)f(\theta)\big),
$$
yielding $f_1(\theta) = - f'(\theta)f(\theta)/2$.
Now, when $f$ is a gradient vector field with $f = -\nabla E$, we find:
\begin{equation*}
        f_1(\theta) = -\frac12 (D_\theta^2 E) \nabla_\theta E(\theta) = -\frac14 \nabla \|\nabla E(\theta) \|^2,
\end{equation*}
where $D_\theta^2 E$ is the Hessian of $E(\theta)$. 
Putting this together, we obtain the first order modified equation:
\begin{equation*}
\dot \theta = f + hf_1 + \mathcal O(h^2) = -\nabla \Big(E(\theta) + \frac h4 \| \nabla E(\theta)\|^2 \Big) +  \mathcal O(h^2),
\end{equation*}
which is a gradient system with modified loss $$\widetilde{E}(\theta) = E(\theta) + \frac h4 \| \nabla E(\theta)\|^2.$$ As for the local error, if $\theta(h)$ is a solution of gradient flow starting at $\theta_{n-1}$, we have in general that $\theta(h)=\theta_n + \mathcal O(h^2)$. The correction $f_1$ is constructed so that it cancels out the $\mathcal O(h^2)$ term in the expansion of its solution, yielding $\tilde \theta(h)=\theta_n + \mathcal O(h^3)$. 
\end{proof}

\begin{remark}
A direct application of a standard result in backward error analysis (\cite{backward_lifespan}, Thm. 1) indicates that the learning rate range where the gradient flow of the modified loss provides a good approximation of gradient descent lies below $h_0=CR/M$, where $\nabla E$ is analytic and bounded by M in a ball of radius $R$ around the initialization point and where $C$ depends on the Runge-Kutta method only, which can be estimated for gradient descent. We call this the moderate learning rate regime. For each learning rate below $h_0$, we can provably find an optimal truncation of the modified equation whose gradient flow is exponentially close to the steps of gradient descent, so the higher term corrections are likely to contribute to the dynamics. 
Given this, we see that the exact value of the upper bound for the moderate regime will correspond to a setting where the optimal truncation is the first order correction only. Calculating this in general is difficult and beyond the scope of this paper. Nonetheless, our experiments strongly suggest that this moderate learning rate regime overlaps substantially with the learning rate range typically used in deep learning.
\end{remark}

Next, we give a purely geometric interpretation of IGR, supporting Prediction \ref{con:IGR_flat_minima}. Consider the loss surface $S$ associated with a loss function $E$ defined over the parameter space $\theta\in\mathbb R^m$. This loss surface is defined as the graph of the loss:
$S = \{(\theta, E(\theta)):\: \theta\in\mathbb R^m\}\subset \R^{m+1}$. 
We define $\alpha(\theta)$ to be the angle between the tangent space $T_\theta S$ to $S$ at $\theta$ and the parameter plane, i.e., the linear subspace $\{(\theta, 0):\: \theta\in\R^m\}$ in $\R^{m+1}$. We can compute this angle using the inner product between the normal vector $N(\theta)$  to $S$ at $\theta$ and the normal vector $\hat z$ to the parameter plane: $\alpha(\theta) = \arccos{\langle N(\theta),\, \hat z\rangle}$. Now we can define the \emph{loss surface slope} at $\theta$ as being the tangent of this angle: $\operatorname{slope}(\theta):= \tan{\alpha(\theta)}$. This is a natural extension of the 1-dimensional notion of slope. With this definition, we can now reformulate the modified loss function in a purely geometric fashion:

\begin{proposition}\label{prop:metric_relation} The modified loss $\tilde E$ in Equation \ref{modified_loss} can be expressed in terms of the loss surface slope as  $\tilde E(\theta) = E(\theta) + \frac h4 \operatorname{slope}^2(\theta)$.
\end{proposition}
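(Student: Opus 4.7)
The plan is to reduce the geometric quantity $\operatorname{slope}^2(\theta)$ to $\|\nabla E(\theta)\|^2$ and then invoke the closed form for $\widetilde E$ already established in the proof of Theorem \ref{main_thm}. Since $S$ is the graph of $E$, I would first write down an explicit unit normal at the point $(\theta, E(\theta))$: parametrizing $S$ by $\theta\mapsto(\theta, E(\theta))$, the tangent space $T_\theta S$ is spanned by the columns of the Jacobian $(I, \nabla E(\theta)^\top)^\top$, and the vector $(-\nabla E(\theta),\, 1)\in\R^{m+1}$ is orthogonal to every such column. Normalizing yields
\begin{equation*}
N(\theta) \;=\; \frac{1}{\sqrt{1+\|\nabla E(\theta)\|^2}}\bigl(-\nabla E(\theta),\, 1\bigr).
\end{equation*}

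Next I would compute $\cos\alpha(\theta) = \langle N(\theta), \hat z\rangle$, where $\hat z$ is the $(m{+}1)$-st coordinate vector. Only the last component of $N(\theta)$ contributes, giving
\begin{equation*}
\cos\alpha(\theta) \;=\; \frac{1}{\sqrt{1+\|\nabla E(\theta)\|^2}}.
\end{equation*}
Squaring and using the identity $\tan^2\alpha = \sec^2\alpha - 1$, I obtain
\begin{equation*}
\operatorname{slope}^2(\theta) \;=\; \tan^2\alpha(\theta) \;=\; \bigl(1+\|\nabla E(\theta)\|^2\bigr) - 1 \;=\; \|\nabla E(\theta)\|^2.
\end{equation*}

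Finally, combining this with the explicit formula $\widetilde E(\theta) = E(\theta) + \frac{h}{4}\|\nabla E(\theta)\|^2$ derived inside the proof of Theorem \ref{main_thm} immediately gives $\widetilde E(\theta) = E(\theta) + \frac{h}{4}\operatorname{slope}^2(\theta)$, as claimed. I do not expect any serious obstacle: the only subtle point is choosing a consistent orientation for the normal $N(\theta)$, but since the statement involves only $\cos\alpha$ and $\alpha\in[0,\pi/2)$ the sign ambiguity is irrelevant. One might also note, as a sanity check, that $\operatorname{slope}(\theta)$ reduces to the usual absolute slope $|E'(\theta)|$ when $m=1$, matching the standard one-dimensional picture.
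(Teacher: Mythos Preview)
Your proposal is correct and follows essentially the same route as the paper: compute the unit normal to the graph of $E$ in the Monge parametrization, take its inner product with $\hat z$ to get $\cos\alpha(\theta)=(1+\|\nabla E(\theta)\|^2)^{-1/2}$, apply the Pythagorean identity to obtain $\operatorname{slope}^2(\theta)=\|\nabla E(\theta)\|^2$, and then substitute into the closed form for $\widetilde E$ from Theorem~\ref{main_thm}. The paper packages the intermediate step as Corollary~\ref{slope_cor} (phrased via $mR_{IG}=\|\nabla E\|^2$), but the argument is identical.
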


This proposition is an immediate consequence of Theorem \ref{main_thm} and Corollary \ref{slope_cor} in Appendix \ref{sec:geomemtry_of_igr}. It tells us that gradient descent with higher amounts of implicit regularization (higher learning rate) will implicitly minimize the loss surface slope locally along with the original training loss. Prediction \ref{con:IGR_flat_minima} claims that this local effect of implicit slope regularization accumulates into the global effect of directing gradient descent trajectories toward global minima in regions surrounded by shallower slopes - toward flatter (or broader) minima. 

\begin{remark}
It is important to note that IGR does not help gradient descent to escape from local minima. In the learning rate regime where the truncated modified equation gives a good approximation for gradient descent, the steps of gradient descent follow the gradient flow of the modified loss closely. As Proposition \ref{prop:minima} shows, the local minima of the original loss are still local minima of the modified loss so gradient descent within this learning rate regime remains trapped within the basin of attraction of these minima. IGR does not lead to an escape from local minima, but instead, encourages a shallower path toward flatter solutions close to the submanifold of global interpolating minima, which the modified loss shares with the original loss (Proposition \ref{prop:minima}).
\end{remark}

\section{Explicit Gradient Regularization}

For overparameterized models, we predict that the strength of IGR relative to the original loss can be controlled by increasing the learning rate $h$ (Prediction \ref{con:IGR_rate}). However, gradient descent becomes unstable when the learning rate becomes too large. For applications where we wish to increase the strength of IGR beyond this point, we can take inspiration from implicit gradient regularization to motivate \emph{Explicit Gradient Regularization} (EGR), which we define as: 
\begin{equation}\label{EGR_loss}
    E_\mu (\theta) = E(\theta) + \mu \| \nabla E(\theta) \|^2
\end{equation}
where, $\mu$ is the explicit regularization rate, which is a hyper-parameter that we are free to choose, unlike the implicit regularization rate $\lambda$ (Equation \ref{regrate}) which can only by controlled indirectly. Now, we can do gradient descent on $E_\mu$ with small learning rates and large $\mu$. 

Although EGR is not the primary focus of our work, we will demonstrate the effectiveness of EGR for a simple two parameter model in the next section (Section \ref{sec:deep_linear_networks}) and for a ResNet trained on Cifar-10 (Figure \ref{fig:robustness}c). Our EGR experiments  act as control study in this work, to demonstrate that the $R_{IG}$ term arising implicitly in gradient descent can indeed improve test accuracy independent of confounding effects that may arise when we control IGR implicitly through the learning rate. Namely, if we had not observed a significant boost in model test accuracy by adding the $R_{IG}$ term explicitly, our prediction that implicit regularization helps to boost test accuracy would have been in doubt.

\textbf{Related work}: Explicit regularization using gradient penalties has a long history. In early work, \cite{doublebackprop} used a gradient penalty (using input gradients instead of  parameter gradients). \cite{hochreiter1997flat} introduced a regularization penalty to guide gradient descent toward flat minima. EGR is also reminiscent of other regularizers such as dropout, which similarly encourages robust parameterizations \citep{morcos2018importance, JMLR:v15:srivastava14a, Enzo2018}. More recently, loss gradient penalties have been used to stabilize GAN training \citep{nagarajan2017gradient, balduzzi2018mechanics, mescheder2017numerics, qin2020training}. The success of these explicit regularizers demonstrates the importance of this type of regularization in deep learning.

\section{IGR and EGR in a 2-d linear model}
\label{sec:deep_linear_networks}

In our first experiment we explore implicit and explicit gradient regularization in a simple two-parameter model with a loss given by $E(a, b) = \left(y - f(x;a,b) \right)^2/2$, where $f(x;a,b) = abx$ is our model, $a, b \in \mathbb{R}$ are the model parameters and $x, y\in \mathbb{R}$ is the training data. We have chosen this model because we can fully visualize the gradient descent trajectories in 2-d space. For a single data point, this model is overparameterized with global minima located along a curve attractor defined by the hyperbola $ab = y/x$. For small learning rates, gradient descent follows the gradient flow of the loss from an initial point $(a_0, b_0)$ toward the line attractor. For larger learning rates, gradient descent follows a longer path toward a different destination on the line attractor (Figure \ref{fig:linear}a).

We can understand these observations using Theorem \ref{main_thm}, which predicts that gradient descent is closer to the modified flow given by $\dot{a} = - \nabla_a \widetilde{E}(a, b) $ and $\dot{b} = - \nabla_b \widetilde{E}(a, b) $ where  $\widetilde{E}(a, b) = E(a, b) + \lambda R_{IG}(a, b)$ is the modified loss from Equation \ref{modified_loss},  $R_{IG}(a,b) = \left(|a|^2 + |b|^2\right) x^2 E(a, b)$ is the implicit regularization term from Equation \ref{regterm} and $\lambda = h/2$ is the implicit regularization rate from Equation \ref{regrate}, with learning rate $h$. Although the modified loss and the original loss have the same global minima, they generate different flows. Solving the modified flow equations numerically starting from the same initial point $(a_0, b_0)$ as before, we find that the gradient descent trajectory is closer to the modified flow than the exact flow, consistent with Theorem \ref{main_thm} (Figure \ref{fig:linear}a).

Next, we investigate Prediction \ref{con:IGR_rate}, that the strength of the implicit gradient regularization $R_{IG}(a,b)$ relative to the original loss $E(a, b)$ can be controlled by increasing the regularization rate $\lambda$. In this case, this means that larger learning rates should produce gradient descent trajectories that lead to minima with a smaller value of $R_{IG}(a,b)/E(a, b) =x^2 \left(|a|^2 + |b|^2\right) $. It is interesting to note that this is proportional to the parameter norm, and also, to the square of the loss surface slope. In our numerical experiments, we find that larger learning rates lead to minima with smaller L2 norm (Figure \ref{fig:linear}b), closer to the flatter region in the parameter plane, consistent with Prediction \ref{con:IGR_rate} and \ref{con:IGR_flat_minima}. The extent to which we can strengthen IGR in this way is restricted by the learning rate. For excessively large learning rates, gradient descent ricochets from peak to peak, until it either diverges or lands in the direct vicinity of a minimum, which is sensitively dependent on initialization (Figure \ref{fig:ricochet}). 

To go beyond the limits of implicit gradient regularization, we can explicitly regularize this model using Equation \ref{EGR_loss} to obtain a regularized loss $E_{\mu}(a, b) = E(a, b) + \mu \left(|a|^2 + |b|^2\right) x^2 E(a, b) $. Now, if we numerically integrate $\dot{a} = -\nabla_a E_{\mu}(a, b)$ and $\dot{b} = -\nabla_b E_{\mu}(a, b)$ starting from the same initial point $(a_0, b_0)$, using a very large explicit regularization rate $\mu$ (and using gradient descent with a very small learning rate $h$ for numerical integration, see Appendix \ref{appendix:2d}) we find that this flow leads to global minima with a small L2 norm (Figure \ref{fig:linear}a) in the flattest region of the loss surface. This is not possible with IGR, since it would require learning rates so large that gradient descent would diverge. 

\begin{figure}
  \centering
  \includegraphics[width=0.75\linewidth]{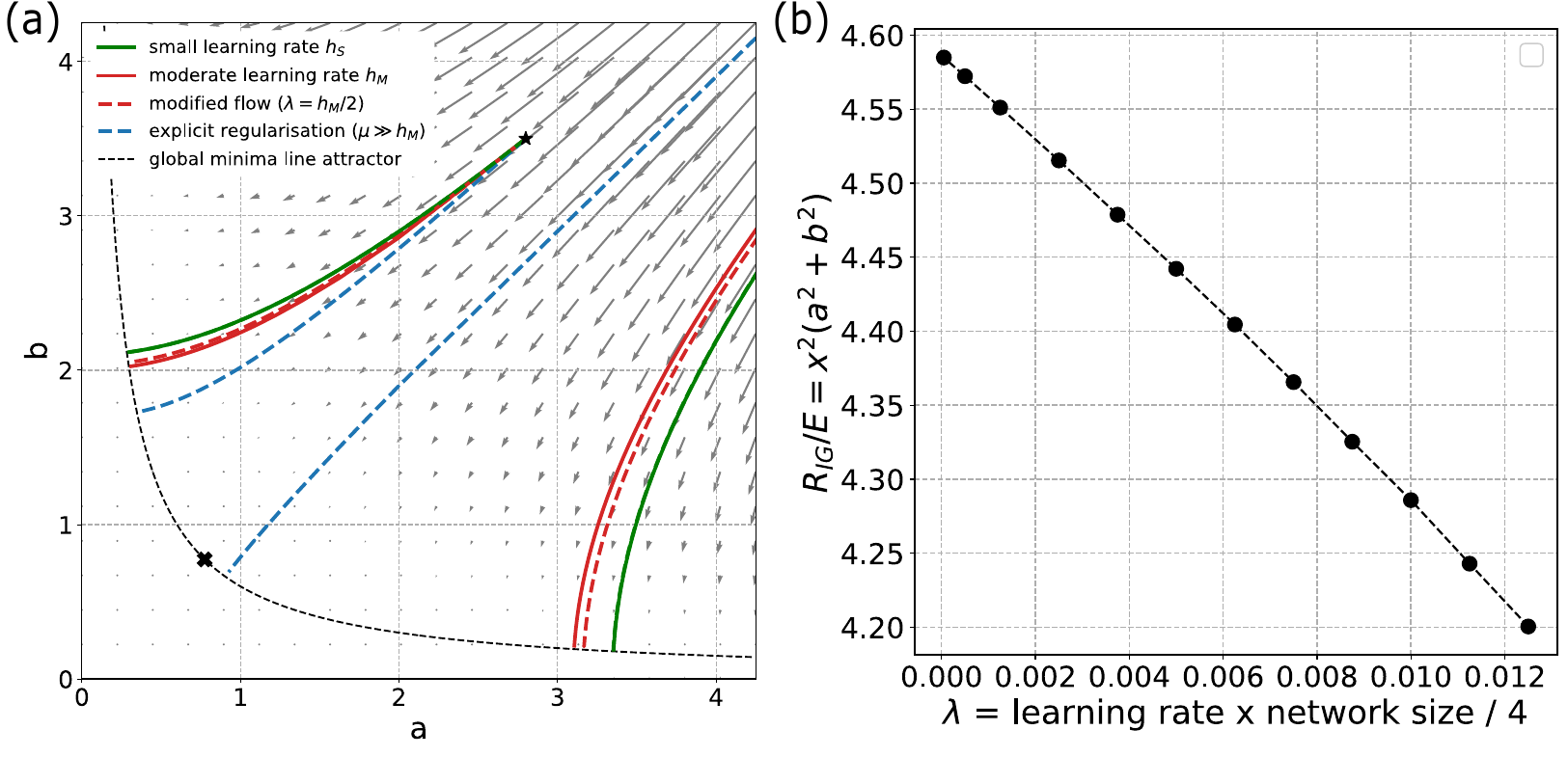}
  \caption{Implicit gradient regularization and explicit gradient regularization for a simple 2-d model. (a) In this phase space plot, the loss surface of a simple overparameterized model is represented with arrows denoting the direction of gradient flow from each point, with global minima represented by a line attractor (black dashed line). Gradient descent trajectories originating from two different starting points are illustrated (from $a_0  = 2.8, b_0  = 3.5$, and from $a_0 = 75, b_0  = 74.925$, off plot). The global minima in the flattest region of the loss surface and with lowest L2 norm is indicated with a black cross. For small learning rate, $h_S$, gradient descent trajectories follow the exact gradient flow closely (green lines). For a moderate learning rate $h_M$, gradient descent follows a longer trajectory (red lines). This is closer to the corresponding exact flow following the modified loss gradient, with $\lambda = h_M/2$, consistent with backward error analysis.  We also calculate the gradient descent trajectory with explicit regularization (dashed blue lines) using large regularization rates ($\mu \gg h_M$).  (b) The ratio $R_{IG}/E$ at the end of training is smaller for larger learning rates (See Appendix \ref{appendix:2d} for details). }
  \label{fig:linear}
\end{figure}

\section{IGR and EGR in deep neural networks}
\label{sec:experiments}

Next, we empirically investigate implicit gradient regularization and explicit gradient regularization in deep neural networks. We consider a selection of MLPs trained to classify MNIST digits and we also investigate Resnet-18 trained to classify CIFAR-10 images. All our models are implemented using Haiku \citep{haiku2020github}.

\begin{figure}
\centering
  \includegraphics[width=0.8\linewidth]{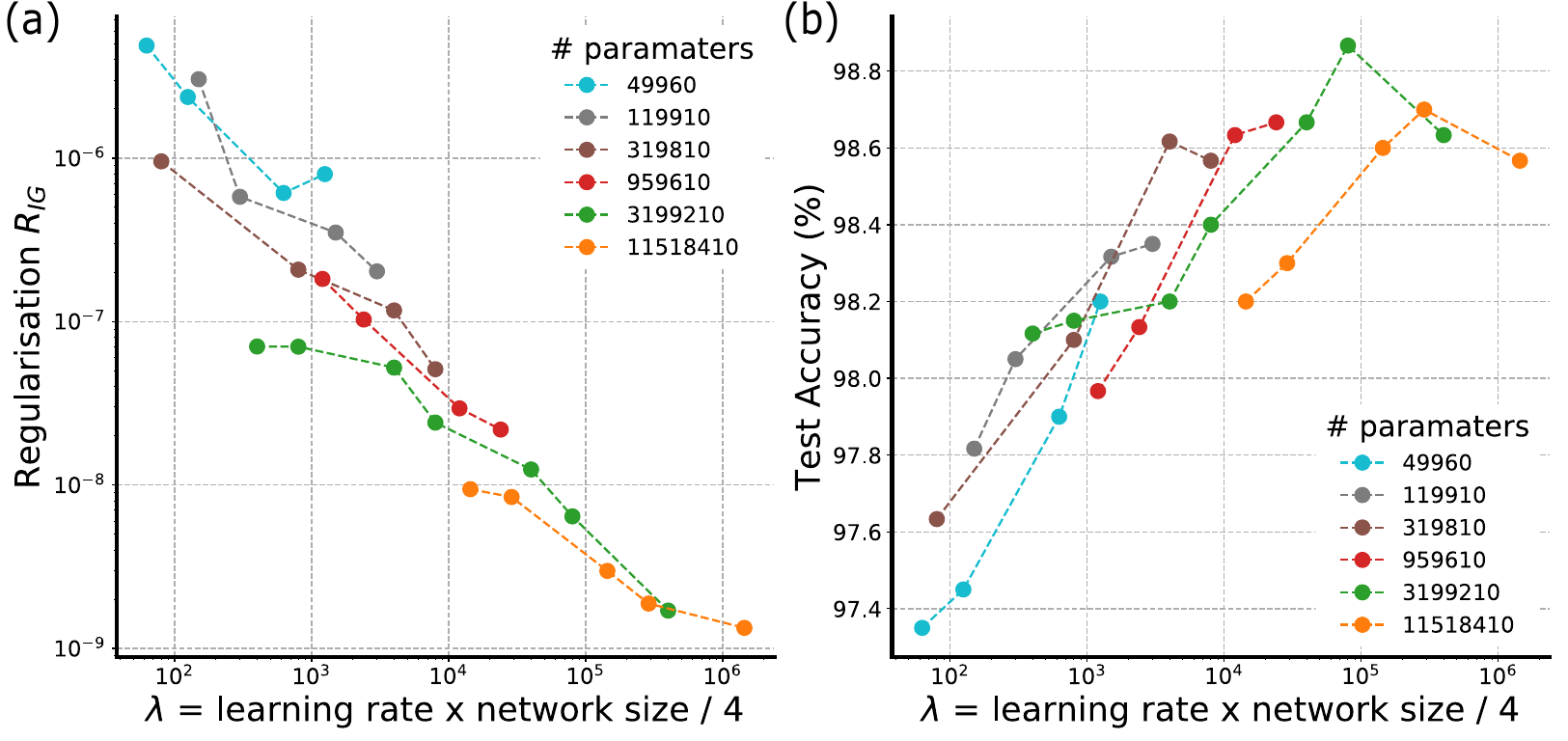}
  \caption{Implicit regularization and test accuracy: (a) Here, each dot represents a different MLP, with different learning rates and network sizes. Implicit gradient regularization $R_{IG}$ is reported for each model, at the time of maximum MNIST test accuracy and $100\%$ train accuracy. We see that models with larger implicit regularization rate $\lambda$ have smaller values of $R_{IG}$. (b) Networks with higher values of $\lambda$ also have higher maximum test accuracy values.
  }
  \label{fig:Regularization constant}
\end{figure}

To begin, we measure the size of implicit regularization in MLPs trained to classify MNIST digits with a variety of different learning rates and network sizes (Figure~\ref{fig:Regularization constant}). Specifically, we train 5-layer MLPs with $n_l$ units per layer, where $n_l \in \{50, 100, 200, 400, 800, 1600\}$, $h \in \{0.5, 0.1, 0.05, 0.01, 0.005, 0.001, 0.0005\}$, using ReLu activation functions and a cross entropy loss (see Appendix \ref{appendix:dnn_exp} for further details and see Figures \ref{fig:S Training curves iteration time}, \ref{fig:S Test curves} and \ref{fig:S Training curves physical time} for training and test data curves). We report $R_{IG}$ and test accuracy at the time of maximum test accuracy for each network that fits the training data exactly. We choose this time point for comparison because it is important for practical applications. We find that $R_{IG}$ is smaller for larger learning rates and larger networks (Figure~\ref{fig:Regularization constant}a), consistent with Theorem \ref{main_thm} and Prediction \ref{con:IGR_rate}. 
Next, we measure the loss surface slope in 5-layer MLPs, with 400 units per layer, trained to classify MNIST digits with a range of different learning rates. We find that neural networks with larger learning rates, and hence, with stronger IGR have smaller slopes at the time of maximum test accuracy (Figure \ref{fig:robustness}a).  We also measure the loss surface slope in the vicinity of these optima. To do this, we add multiplicative Gaussian noise to every parameter according to $\theta_{p} = \theta(1 + \eta)$, where $\theta$ are the parameters of a fully trained model and $\theta_{p}$ are the parameters after the addition of noise, where $\eta\sim\mathcal{N}(0,\sigma)$. We find that neural networks trained with larger learning rates have flatter slopes and  these slopes remain small following larger perturbations (Figure \ref{fig:robustness}a). These numerical results are consistent with our prediction that IGR encourages the discovery of flatter optima (Prediction \ref{con:IGR_flat_minima})

Next, we observe that improvements in test set accuracy are correlated with increases in regularization rate (Figure \ref{fig:Regularization constant}b), and also with increases in learning rate and network size (Figure \ref{fig:S learning_rate_network_size}). This is consistent with Prediction \ref{con:test_error}. Furthermore, the correlation between test set accuracy and network size $m$ supports our use of network size scaling in Equation \ref{regrate} and \ref{regterm}. 

Next, we explore the robustness of deep neural networks in response to parameter perturbations. In previous work, it has been reported that deep neural networks are robust to a substantial amount of parameter noise, and that this robustness is stronger in networks with higher test accuracy \citep{morcos2018importance}. We measure the degradation in classification accuracy as we increase the amount of multiplicative Gaussian noise and find that neural networks with larger learning rates, and hence, with stronger IGR, are more robust to parameter perturbations after training (Figure~\ref{fig:robustness}c), consistent with Prediction \ref{con:robustness}. This may explain the origin, in part, of deep neural network robustness. 

We also explore IGR in several other settings. For ResNet-18 models trained on CIFAR-10, we find that  $R_{IG}$ is smaller and test accuracy is higher for larger learning rates (at the time of maximum test accuracy) (Figure \ref{fig:S Cifar Training curves}, \ref{fig:S Cifar plots}), consistent again with Theorem \ref{main_thm} and Predictions \ref{con:IGR_rate} and \ref{con:test_error}. We also explore IGR using different stopping time criteria (other than the time of maximum test accuracy), such as fixed iteration time (Figures \ref{fig:S Training curves iteration time}, \ref{fig:S Test curves}), and fixed physical time (Figure \ref{fig:S Training curves physical time}) (where iteration time is rescaled by the learning rate, see Appendix \ref{appendix:dnn_exp} for further information). We explore IGR for full batch gradient descent and for stochastic gradient descent (SGD) with a variety of different batch sizes  (Figure \ref{fig:S learning_rate_network_size})  and in all these cases, our numerical experiments are consistent with Theorem \ref{main_thm}. These supplementary experiments are designed to control for the presence, and absence, of other sources of implicit regularisation - such as model architecture choice, SGD stochasticity and the choice of stopping time criteria.  

Finally, we provide an initial demonstration of explicit gradient regularization (EGR). Specifically, we train a ResNet-18 using our explicit gradient regularizer (Equation \ref{EGR_loss}) and we observe that EGR produces a boost of more than 12\% in test accuracy (see Figure \ref{fig:robustness}c). This initial experiment indicates that EGR may be a useful tool for training of neural networks, in some situations, especially where IGR cannot be increased with larger learning rates, which happens, for instance, when learning rates are so large that gradient descent diverges.  However, EGR is not the primary focus of our work here, but for IGR, which is our primary focus, this experiment provides further evidence that IGR may play an important role as a regularizer in deep learning.

\begin{figure}
\centering
  \includegraphics[width=1\linewidth]{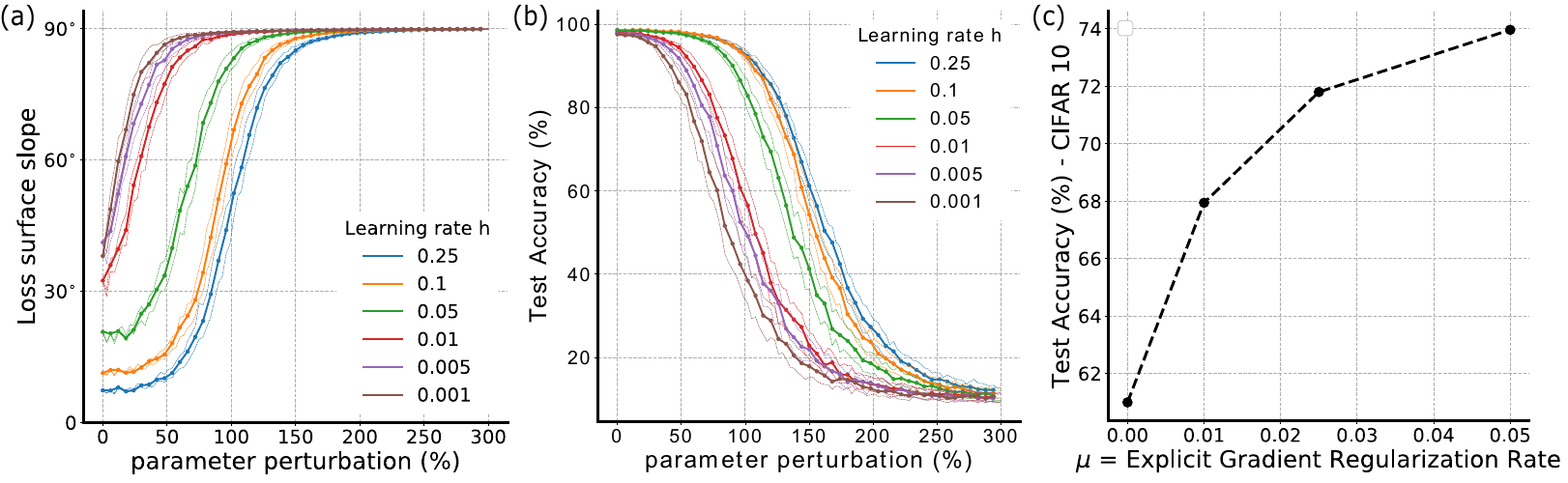}  
  \caption{
   (a) We measure the loss surface slope at the time of maximum test accuracy for models trained on MNIST, and we observe that the loss surface slope is smaller for larger learning rates, and the loss surface slopes remain small for larger parameter perturbations compared to models trained with small learning rates. We perturb our networks by adding multiplicative Gaussian noise to each parameter (up to $300\%$ of the original parameter size). (b) We measure the test accuracy robustness of models trained to classify MNIST digits and see that the robustness increases as the learning rate increases. (Here, solid lines show averages across perturbations and dashed lines demarcate one standard deviation across 100 realizations.) (c) Explicit gradient regularization (EGR) for a ResNet-18 trained on CIFAR-10.}
  \label{fig:robustness}
\end{figure}

\section{Related work}\label{related_work}

\textbf{Implicit regularization:} Many different sources of implicit regularization have been identified, including 
\textbf{early-stopping} \citep{pmlr-v48-hardt16},
\textbf{model initialization} \citep{glorot, NIPS2018_8038, distance_init, pmlr-v80-gunasekar18a, gen_init, gd_relu}, 
\textbf{model architecture}   \citep{visualization, why_it_works, ma2020quenchingactivation},
\textbf{stochasticity} \citep{broadminima, implicit_bias, sgd_reg, implicit_sgd_reg, variational, batch_norm,
sgd_as_bayesian,sgd_generaliztion, hessian,  bayesian_generalization, marginal, jastrzebski2020break},
\textbf{implicit L2 regularity} \citep{implicit_bias, in_search_of_ind_bias, early_stopping, implicit_bias_nonsep, sgd_on_sep_data, Poggio2,  connecting_path},
\textbf{low rank biases} \citep{factorization, impl_reg_in_matrix_fact, impl_reg_not_explainable_by_norms}
among other possibilities. A number of studies have investigated implicit regularization in the discrete steps of gradient descent for specific datasets, losses, or architectures \citep{implicit_bias, in_search_of_ind_bias, NEURIPS2019_f39ae9ff}. IGR might also be useful for understanding implicit regularization in deep matrix factorization with gradient descent \citep{factorization, impl_reg_in_matrix_fact, impl_reg_not_explainable_by_norms}, where gradient descent seems to have a low-rank bias. Our work may also provide a useful perspective on the break-even point in deep learning \citep{jastrzebski2020break}. At a break-even point our backward analysis suggests that gradient descent with large learning rates will move toward flatter regions, consistent with this work. Stochastic effects are also likely to contribute to the trajectory at break-even points.  

\textbf{Learning rate schedules and regimes:} Implicit gradient regularization can be used to understand the role of learning schedules, since learning rate controls the relative strength of implicit regularization and loss optimization. For example, for a cyclical learning rate schedule \citep{cyclical}, cyclically varying learning rates between large and small learning rates can be interpreted as a cyclical variation between large and small amounts of IGR (i.e., alternate phases of optimization and regularization). A number of studies have identified various learning rates regimes characterized by different convergence and generalization properties. For instance \cite{regeffect} identifies a small learning rate regime where a network tends to memorize and a large learning rate regime characterized by increased generalization power. This is consistent with IGR which we believe is most useful at the start of training, orienting the search toward flatter regions, and less important in later stages of the training, when a flatter region has been reached, and where convergence to any of the flatter minima is more important. This is also consistent with \cite{jastrzebski2020break} who showed the importance of large learning rates at the beginning of training in encouraging more favourable optimization trajectories. Also \cite{catapult} identifies a lazy phase, a catapult phase, and a divergent phase, which may be related to the range of backward error analysis applicability.

\textbf{Neural Tangent Kernel:}
The Neural Tangent Kernel (NTK) is especially interesting \citep{fine_grained, NIPS2019_9025,  lazy_training, ntk, NIPS2019_9063, pmlr-v97-oymak19a, kernel_regimes, NIPS2019_9266} since, in the case of the least square loss, the IGR term $R_{IG}$ can be related to \N{the} NTK (see Appendix \ref{appendix:ntk}). This is particularly interesting because it suggests that the NTK may play a role beyond the \emph{kernel regime}, into the \emph{rich regime}. In this context, IGR is also related to the trace of the Fisher Information Matrix \citep{pmlr-v89-karakida19a}. 

\textbf{Runge-Kutta methods:} To the best of our knowledge, backward analysis has not been used previously to investigate implicit regularization in gradient based optimizers. However, Runge-Kutta methods have been used to understand old (and devise new) gradient-based optimization methods \citep{symplectic_optim, NIPS2017_6711, rk_discretization,franca2020dissipative}. A stochastic version of the modified equation was used \citep{stochastic_adaptive_sgd, error_analysis_sgd} to study stochastic gradient descent in the context of stochastic differential equations and diffusion equations with a focus on  convergence and adaptive learning, and very recently \cite{franca2020dissipative} used backward analysis to devise new optimizers to control convergence and stability. 

\section{Discussion}

Following our backward error analysis, we now understand gradient descent as an algorithm that effectively optimizes a modified loss with an implicit regularization term arising through the discrete nature of gradient descent. This leads to several predictions that we confirm experimentally: (i) IGR penalizes the second moment of the loss gradients (Prediction \ref{con:IGR_rate}), and consequently, (ii) it penalizes minima in the vicinity of large gradients and encourages flat broad minima in the vicinity of small gradients (Prediction \ref{con:IGR_flat_minima}); (iii)  these broad minima are known to have low test errors, and consistent with this, we find that IGR produces minima with low test error (Prediction \ref{con:test_error}); (iv) the strength of regularization is proportional to the learning rate and network size (Equation \ref{regrate}), (v) consequently, networks with small learning rates or fewer parameters or both will have less IGR and worse test error, and (vi) solutions with high IGR are more robust to parameter perturbations (Prediction \ref{con:robustness}). 

It can be difficult to study implicit regularization experimentally because it is not always possible to control the impact of various alternative sources of implicit regularization. Our analytic approach to the study of implicit regularization in gradient descent allows us to identify the properties of implicit gradient regularization independent of other sources of implicit regularization. In our experimental work, we take great care to choose models and datasets that were sufficiently simple to allow us to clearly expose implicit gradient regularization, yet, sufficiently expressive to provide insight into larger, less tractable settings. For many state-of-the-art deep neural networks trained on large real-world datasets, IGR is likely to be just one component of a more complex recipe of implicit and explicit regularization. However, given that many of the favourable properties of deep neural networks such as low test error capabilities and parameter robustness are consistent with IGR, it is possible that IGR is an important piece of the regularization recipe.

There are many worthwhile directions for further work. In particular, it would be interesting to use backward error analysis to calculate the modified loss and implicit regularization for other widely used optimizers such as momentum, Adam and RMSprop. It would also be interesting to explore the properties of higher order modified loss corrections. Although this is outside the scope of our work here, we have provided formulae for several higher order terms in the appendix. More generally, we hope that our work demonstrates the utility of combining ideas and methods from backward analysis, geometric numerical integration theory and machine learning and we hope that our contribution supports future work in this direction.

\subsubsection*{Acknowledgments}
We would like to thank Ernst Hairer, Samuel Smith, Soham De, Mihaela Rosca, Yan Wu, Chongli Qin, Mélanie Rey, Yee Whye Teh, Sébastien Racaniere, Razvan Pascanu, Daan Wierstra, Ethan Dyer, Aitor Lewkowycz, Guy Gur-Ari, Michael Munn, David Cohen, Alejandro Cabrera and Shakir Mohamed for helpful discussion and feedback. We would like to thank Alex Goldin, Guy Scully, Elspeth White and Patrick Cole for their support. We would also like to thank our families, especially Wendy; Susie, Colm and Fiona for their support, especially during these coronavirus times.

\newpage

\bibliographystyle{plain}
\bibliography{main_iclr2021_camera_ready.bbl}

\newpage
\appendix
\section{Appendix}

\renewcommand{\theequation}{A.\arabic{equation}}
\renewcommand\thefigure{A.\arabic{figure}}    
\renewcommand\thetable{A.\arabic{table}}   
\setcounter{figure}{0}   
\setcounter{table}{0}   
\setcounter{equation}{0}

\subsection{Backward Error Analysis of the Explicit Euler Method}\label{main_thm_proof}

In this section, we provide formulae for higher order backward analysis correction terms for the explicit Euler method, including the first order correction which is required to complete the proof of Theorem \ref{main_thm}. 

We start by restating the general problem addressed by backward error analysis. To begin, consider a first order differential equation
\begin{equation}
    \dot \theta = f(\theta) \label{ode},
\end{equation}
with vector field $f:\mathbb R^m\rightarrow \mathbb R^m$. The explicit Euler method
\begin{equation}
    \theta_n = \theta_{n-1} + h f(\theta_n), \label{euler}
\end{equation}
with step size $h$ produces a sequence of discrete steps $\theta_0, \theta_1, \dots, \theta_n, \dots$ approximating the solution $\theta(t)$ of Equation \ref{ode} with initial condition $\theta_0$. (In other word, $\theta_n$  approximates $\theta(nh)$ for $n\geq 0$.) However, at each discrete step the Euler method steps off the continuous solution $\theta(t)$ with a one-step error (or local error) $\|\theta_1 - \theta(h)\|$ of order $\mathcal O(h^2)$. Backward error analysis was introduced in numeric integration to study the long term error (or global error) $\| \theta_n - \theta(nh)\|$ of the numerical method. More generally, backward error analysis is useful for studying the long term behavior of a discrete numeric method using continuous flows, such as its numerical phase portrait near equilibrium points, its asymptotic stable orbits, and conserved quantities among other properties (see \citet{backward_lifespan, GNI} for a detailed exposition). It stems from the work of \cite{WILKINSON1960} in numerical linear algebra where the general idea in the study of a numeric solution via backward error analysis is to understand it as an exact solution for the original problem but with modified data. When applied to numerical integration of ODEs this idea translates into finding a modified vector field $\tilde f$ with corrections $f_i$'s to the original vector field $f$ in powers of the step-size
\begin{equation}
    \tilde f(\theta) = f(\theta) + h f_1(\theta) + h^2 f_2(\theta) + \cdots \label{modified}
\end{equation}
so that the numeric method steps now exactly follow the (formal) solution of the \emph{modified equation}:
\begin{equation}
    \dot \theta = \tilde f(\theta) \label{modified_eq}
\end{equation}
In other words, if $\theta(t)$ is the solution of the modified equation (\ref{modified_eq}) and $\theta_n$ is the $n^{th}$ discrete step of the numerical method, now one has:
\begin{equation}
    \theta_n = \theta(nh) \label{physical_time}
\end{equation}
In general, the sum of the corrections $f_i$'s in (\ref{modified}) diverges, and the modified vector field $\tilde f$ is only a formal power series. For practical purposes, one needs to truncate the modified vector field. If we truncate the modified vector field up to order $n$ (i.e., we discard the higher corrections $f_l$ for $l \geq n+1$), the one-step error between the numeric method and the solution of the truncated modified equation is now of order $\mathcal O(h^{n+1})$. It is also possible to bound the long term error, but in this section we will only formally derive the higher corrections $f_i$ for the explicit Euler method. (We refer the reader to \citet{backward_lifespan}, for instance, for precise bounds on the error for the truncated modified equation.) 

To derive the corrections $f_i$'s for the explicit Euler method, it is enough to consider a single step of gradient descent $\theta + hf(\theta)$ and identify the corresponding powers of $h$ in
$$
    \theta + hf(\theta) = \operatorname{Taylor}_{|t=0}\theta(h)
$$
by expanding the solution $\theta(h)$ of the modified equation (\ref{modified_eq}) starting at $\theta$ into its Taylor series at zero:
\begin{equation}
    \theta(h) = \theta + \sum_{n \geq 1} \frac{h^n}{n!}\theta^{(n)}(0). \label{taylor}
\end{equation}

For a gradient flow $f(\theta)=-\nabla E(\theta)$ its Jacobian $f'(\theta)$ is symmetric. In this case, it is natural to look for a modified vector field whose Jacobian is still symmetric (i.e., the higher order corrections can be expressed as gradients). If we assume this, we have the following expression for the higher derivatives of the modified flow solution:
\begin{lemma}
In the notation above and with $\tilde f$ whose Jacobian is symmetric, we have
\begin{eqnarray}
\dot\theta(0)      & = & \tilde f(\theta) \\
\ddot\theta(0)     & = & \frac d{d\theta}\frac{\|\tilde f(\theta) \|^2}{2} \\
\theta^{(n)}(0)    & = & \frac{d^{n-2}}{dt^{n-2}} \frac d{d\theta}\frac{\|\tilde f(\theta) \|^2}{2}, \quad n\geq 2, \label{deriv1}
\end{eqnarray}

where $\frac{d^{n-2}}{dt^{n-2}} \frac d{d\theta} g(\theta)$ is shorthand to denote the operator
$
\frac{d^{n-2}}{dt^{n-2}}_{\big | t=0} \frac d{d\theta}_{\big | \theta = \theta(t)} g(\theta)
$ and $\theta(t)$ is the solution of the modified equation \ref{modified_eq}. We will will use this shorthand notation throughout this section.
\end{lemma}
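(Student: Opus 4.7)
The plan is to proceed by a direct computation of successive $t$-derivatives of the solution $\theta(t)$ to the modified equation $\dot\theta = \tilde f(\theta)$, using only the ODE itself, the chain rule, and the symmetry hypothesis on the Jacobian of $\tilde f$. The key algebraic identity I would isolate up front is: if $D\tilde f(\theta)$ denotes the Jacobian and is symmetric, then
\[
\frac{d}{d\theta}\frac{\|\tilde f(\theta)\|^2}{2} \;=\; \bigl(D\tilde f(\theta)\bigr)^{\!\top}\tilde f(\theta) \;=\; D\tilde f(\theta)\,\tilde f(\theta).
\]
The first equality is just the chain rule applied to $\theta\mapsto \tfrac12\langle \tilde f(\theta),\tilde f(\theta)\rangle$, and the second equality is precisely where symmetry of the Jacobian is used. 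This is the one place where the ``gradient field'' assumption enters.

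With that identity in hand, the first two derivatives fall out immediately. The relation $\dot\theta(0)=\tilde f(\theta)$ is just the defining ODE evaluated at $t=0$. Differentiating once more in $t$ and applying the chain rule gives
\[
\ddot\theta(t) \;=\; D\tilde f(\theta(t))\,\dot\theta(t) \;=\; D\tilde f(\theta(t))\,\tilde f(\theta(t)),
\]
and then evaluating at $t=0$ and invoking the identity above rewrites the right-hand side as $\tfrac{d}{d\theta}\tfrac{\|\tilde f(\theta)\|^2}{2}$, which is the second claim.

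For the general $n\geq 2$ case, I would argue by a simple induction (or, equivalently, by iterating the $d/dt$ operator on the already-established expression for $\ddot\theta$). Writing $G(\theta):= \tfrac{d}{d\theta}\tfrac{\|\tilde f(\theta)\|^2}{2}$, the step $n=2$ reads $\theta^{(2)}(t)=G(\theta(t))$, and differentiating both sides $n-2$ times in $t$ yields
\[
\theta^{(n)}(t) \;=\; \frac{d^{\,n-2}}{dt^{\,n-2}}\,G(\theta(t)),
\]
which at $t=0$ is exactly the shorthand expression \eqref{deriv1}. Note that each successive $d/dt$ is expanded by the chain rule using $\dot\theta = \tilde f(\theta)$, so the right-hand side is a well-defined function of $\theta$ alone once expanded; this is the content of the shorthand notation introduced in the statement.

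The only mild subtlety, and the place I would be most careful, is the bookkeeping of the symmetry hypothesis: without it, the second derivative produces $D\tilde f\cdot \tilde f$ rather than a true gradient, and none of the higher derivatives can be written in the compact gradient form \eqref{deriv1}. Once symmetry is used to convert $D\tilde f\,\tilde f$ into $\nabla\bigl(\tfrac12\|\tilde f\|^2\bigr)$ at the $n=2$ step, the higher-order formulas are purely formal consequences of repeated differentiation in $t$, so no further analytic input is needed.
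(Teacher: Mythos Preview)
Your proposal is correct and follows essentially the same argument as the paper: evaluate the ODE at $t=0$ for the first derivative, differentiate once and use the symmetry of $\tilde f'$ to rewrite $\tilde f'\tilde f$ as $\tfrac{d}{d\theta}\tfrac{\|\tilde f\|^2}{2}$ for the second, and then differentiate $n-2$ further times in $t$ for the general case. The only difference is expository---you isolate the Jacobian-symmetry identity up front, whereas the paper invokes it inline---but the logical content is identical.
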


\begin{proof}
By definition $\theta(t)$ is the solution of (\ref{modified_eq}) with initial condition $\theta(0) = \theta$, hence $\dot\theta(0) = \tilde f(\theta)$. Differentiating both sides of (\ref{modified_eq}) with respect to $t$, we obtain
$$
\ddot\theta(t) 
= \tilde f'(\theta(t))\dot\theta(t) 
= \tilde f'(\theta(t))\tilde f(\theta(t))
= \frac d{d\theta}_{\big |\theta = \theta(t)} \frac{\|\tilde f(\theta)\|^2}2.
$$
The last equation is obtained because we assumed $\tilde f'(\theta)$ is symmetric. Now the higher derivatives are obtained by differentiating both sides of the last equation $n-2$ times with respect to $t$ and setting $t=0$.
\end{proof}

The previous lemma gives a formula for the derivatives $\theta^{(n)}(0)$. However in order to compare the powers of $h$ we need to expand these derivatives into power of $h$, which is what the next lemma does:

\begin{lemma}
In the notation above, we have
\begin{equation}\label{deriv2}
    \theta^{(n)}(0) = \sum_{k\geq 0} h^k L_{n,k}(\theta)\quad n\geq 2,
\end{equation}
where we define
\begin{equation}\label{Lnk}
    L_{n,k}(\theta) = \frac{d^{n-2}}{dt^{n-2}} \frac d{d\theta}\sum_{i+j=k}
    \frac{\langle f_i(\theta), f_j(\theta)\rangle}2,
\end{equation}
with $f_0(\theta)$ being the original vector field $f(\theta)$, and $\langle\,,\,\rangle$ denoting the inner product of two vectors.
\end{lemma}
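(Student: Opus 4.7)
The plan is to derive the claimed expansion by pure bookkeeping: substitute the formal power series for $\tilde f$ into the closed-form expression for $\theta^{(n)}(0)$ supplied by the preceding lemma, and then regroup the resulting double sum by total degree in $h$ using the Cauchy product.

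First I would start from equation \eqref{deriv1}, namely
$$\theta^{(n)}(0) = \frac{d^{n-2}}{dt^{n-2}} \frac{d}{d\theta} \frac{\|\tilde f(\theta)\|^2}{2}, \quad n \geq 2,$$
and expand the squared norm using $\tilde f(\theta) = \sum_{i \geq 0} h^i f_i(\theta)$ with $f_0 = f$, so that by bilinearity of the inner product
$$\|\tilde f(\theta)\|^2 = \sum_{i,j \geq 0} h^{i+j} \langle f_i(\theta), f_j(\theta) \rangle.$$
Collecting the terms of equal total degree $k = i+j$ (the Cauchy product) yields
$$\frac{\|\tilde f(\theta)\|^2}{2} = \sum_{k \geq 0} h^k \sum_{i+j = k} \frac{\langle f_i(\theta), f_j(\theta) \rangle}{2}.$$

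Second, I would commute the operator $\frac{d^{n-2}}{dt^{n-2}} \frac{d}{d\theta}$ with the outer sum over $k$. This is legitimate termwise because the explicit factors $h^k$ are $\theta$- and $t$-independent scalar prefactors and the operator is $\mathbb R$-linear; note that the operator itself is $h$-dependent through the modified flow $\dot\theta = \tilde f(\theta)$ along which the $d/dt$ acts, but that $h$-dependence is absorbed into the very definition of $L_{n,k}$ given in \eqref{Lnk}. Pulling the prefactors out produces
$$\theta^{(n)}(0) = \sum_{k \geq 0} h^k \, \frac{d^{n-2}}{dt^{n-2}} \frac{d}{d\theta} \sum_{i+j = k} \frac{\langle f_i(\theta), f_j(\theta) \rangle}{2} = \sum_{k \geq 0} h^k L_{n,k}(\theta),$$
which is exactly the claim.

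There is no real obstacle here, as the lemma is a purely formal rearrangement: the interchange of the derivatives with the infinite sum is to be read order by order in $h$ in the usual formal-power-series sense of backward error analysis, so no convergence question arises. The only reason to record the statement explicitly is that it sets up the next step of the backward analysis, in which the corrections $f_i$ will be identified recursively by plugging this expansion into Taylor's series \eqref{taylor} and matching the coefficient of each power of $h$ against the single gradient-descent step $\theta + h f(\theta)$.
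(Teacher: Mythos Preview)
Your proposal is correct and follows essentially the same argument as the paper: expand $\|\tilde f(\theta)\|^2/2$ as a Cauchy product in powers of $h$ and then apply the operator from equation \eqref{deriv1} termwise. You supply more commentary (on linearity, the formal nature of the interchange, and the hidden $h$-dependence in $L_{n,k}$) than the paper, which simply states that the result ``follows immediately'' from \eqref{deriv1} and the Cauchy-product expansion, but the substance is identical.
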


\begin{proof}
This follows immediately from (\ref{deriv1}) and by expanding $\|\tilde f(\theta)\|^2$ in powers of $h$:
\begin{eqnarray*}
\frac{\| \tilde f(\theta) \|^2}2 & = & \frac12\langle f_0(\theta) + h f_1(\theta) + \cdots,\,
f_0(\theta) + h f_1(\theta) + \cdots \rangle \\
& = & \sum_{k\geq 0} h^k \sum_{i+j=k} \frac{\langle f_i(\theta), f_j(\theta)\rangle}2.
\end{eqnarray*}
\end{proof}

Putting everything together, we now obtain a Taylor series for the solution of the modified equation as a formal power series in $h$:

\begin{lemma}
In the notation above, we have
\begin{equation}
    \theta(h) = \theta + \sum_{l\geq 0} h^{l+1}\big(
    f_l(\theta) + H_l(f_0, f_1, \dots, f_{l-1})(\theta)
    \big), \label{taylor_power}
\end{equation}
where $f_0$ is the original vector field $f$, $H_0 = 0$ and we define recursively
\begin{equation}\label{Hl}
    H_l(f_0, f_1, \dots, f_{l-1})(\theta) = \sum_{
    \substack{
       n+k = l+1\\
       n\geq 2, \,l \geq 0\\ 
     }}\frac1{n!}L_{n,k}(\theta)
\end{equation}
\end{lemma}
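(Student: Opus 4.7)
The plan is to substitute the Taylor series expansion of $\theta(h)$ from Equation \ref{taylor}, split the $n=1$ contribution from the $n \geq 2$ contributions, apply the previous lemma to the latter, and then reindex the resulting double sum to collect powers of $h$.

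First I would separate the Taylor series as
\begin{equation*}
\theta(h) = \theta + h\,\dot\theta(0) + \sum_{n\geq 2}\frac{h^n}{n!}\theta^{(n)}(0).
\end{equation*}
The first derivative is $\dot\theta(0)=\tilde f(\theta)$, which by the definition of the modified vector field \ref{modified} expands as $\sum_{l\geq 0}h^l f_l(\theta)$. Multiplying by $h$, the $n=1$ piece contributes exactly $\sum_{l\geq 0} h^{l+1} f_l(\theta)$. This accounts for the $f_l(\theta)$ summands in the claimed formula.

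Next I would plug the previous lemma's identity $\theta^{(n)}(0)=\sum_{k\geq 0}h^k L_{n,k}(\theta)$ into the remaining sum to obtain the double sum
\begin{equation*}
\sum_{n\geq 2}\sum_{k\geq 0}\frac{h^{n+k}}{n!}L_{n,k}(\theta).
\end{equation*}
Setting $l+1=n+k$ reindexes this as $\sum_{l\geq 1}h^{l+1}\sum_{n+k=l+1,\,n\geq 2,\,k\geq 0}\frac{1}{n!}L_{n,k}(\theta)$, and the inner sum is precisely $H_l(f_0,\dots,f_{l-1})(\theta)$ from the definition \ref{Hl}. The lower bound $l\geq 1$ arises because $n\geq 2$ and $k\geq 0$ force $l+1\geq 2$; this is consistent with the convention $H_0=0$, so we may freely extend the outer sum to $l\geq 0$.

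Combining the two contributions yields the desired formula. The one point that deserves a short verification is that $H_l$ really depends only on $f_0,\ldots,f_{l-1}$: in every term of the inner sum we have $k=l+1-n\leq l-1$ since $n\geq 2$, so each $L_{n,k}$ involves only inner products $\langle f_i,f_j\rangle$ with $i+j=k\leq l-1$, hence $i,j\leq l-1$. I do not expect any real obstacle here; the only mildly delicate step is the reindexing bookkeeping and checking convergence only in the formal power series sense, since $\tilde f$ itself is only formal. Everything else follows mechanically from the two preceding lemmas.
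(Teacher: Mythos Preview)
Your proposal is correct and follows essentially the same approach as the paper: split off the $n=1$ term of the Taylor series, expand $h\tilde f(\theta)$ as $\sum_{l\geq 0}h^{l+1}f_l(\theta)$, substitute the previous lemma into the $n\geq 2$ part, and reindex via $l+1=n+k$. Your additional verification that $H_l$ depends only on $f_0,\dots,f_{l-1}$ (because $k\leq l-1$) is a nice detail the paper leaves implicit in its notation.
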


\begin{proof}
Replacing $\theta^{(n)}(0)$ by their expression in (\ref{deriv2}) in the Taylor series (\ref{taylor}), we obtain
\begin{eqnarray*}
\theta(h) & = &\theta + h\tilde f(\theta) + \sum_{n\geq 2}\sum_{k\geq 0}
\frac{h^{n+k}}{n!}L_{n,k}(\theta) \\
& = & \theta + \sum_{l\geq 0} h^{l+1}\big(f_l(\theta) + 
\sum_{\substack{
       n+k = l+1\\
       n\geq 2, \,l \geq 0\\ 
     }}\frac1{n!}L_{n,k}(\theta)
\big),
\end{eqnarray*}
which finishes the proof.
\end{proof}

Now comparing the Taylor series  for the modified equation solution in its last form in (\ref{taylor_power}) with one step of the Euler method
$$
\theta + hf(\theta) = \theta + hf(\theta) +\sum_{l\geq1} h^{l+1}\big(f_l(\theta) + H_l(\theta)\big)
$$
for each order of $h$. Observe that this formula is yet not fully developped in $h$ since the terms $H_l(\theta)$ contain a dependency in $h$ through the $f_i$'s. Therefore we can not readily identify $f_l$ with $-H_l$, but rather we obtain the following proposition:

\begin{proposition}\label{ba}
The corrections $f_i$'s for the Euler method modified equation in (\ref{modified}) are given by the general recursive formula:
\begin{equation}\label{corrections}
    f_l(\theta) = - \mathrm{order}_{l+1}\left( \sum_{i=0}^l h^{i+1}H_i(\theta)\right)
\end{equation}
where the $H_i(\theta)$ are defined by Equation \ref{Hl}, and $\mathrm{order_{l+1}}$ is the operator that extracts the terms of order $l+1$ in $h$.
\end{proposition}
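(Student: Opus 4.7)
The plan is to derive the recursion by matching, order by order in $h$, the Taylor expansion (\ref{taylor_power}) of the modified flow solution $\theta(h)$ with one explicit Euler step $\theta + h f(\theta)$. Writing $f_0 = f$, this identification forces
$$
\sum_{l \geq 0} h^{l+1}\bigl(f_l(\theta) + H_l(\theta)\bigr) = h f_0(\theta),
$$
so once both sides are fully expanded as formal power series in $h$, the coefficient of each positive power of $h$ on the left must vanish, except at order $h^1$, where it equals $f_0$.

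The subtlety that prevents reading off $f_l = -H_l$ directly is that $H_l$ is not a pure function of $\theta$ but carries additional hidden powers of $h$. By Equations \ref{Lnk} and \ref{Hl}, $H_l$ is built from the operators $L_{n,k}$, which involve repeated time differentiation $d^{n-2}/dt^{n-2}$ along the modified flow $\dot\theta = \tilde f(\theta)$. Each such differentiation reinjects the full series $\tilde f = f_0 + h f_1 + h^2 f_2 + \cdots$, so the term $h^{l+1} H_l$ actually contributes at orders $h^{l+1}, h^{l+2}, \ldots$. The whole structure is therefore a formal power series in $h$ whose coefficients mix the $f_i$'s in a triangular manner.

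I would then proceed by strong induction on $l$. The base case $l=0$ is immediate: $H_0 = 0$ by convention, and the $h^1$ coefficient on the left is $f_0$, matching the Euler step. For the inductive step, assume $f_0, f_1, \ldots, f_{l-1}$ have been determined. A direct inspection of (\ref{Lnk})--(\ref{Hl}) shows that $H_l$ depends only on $f_0, \ldots, f_{l-1}$, so it is already available; likewise $H_i$ for $i < l$ involves only the lower $f_j$'s, which are already fixed. Collecting the coefficient of $h^{l+1}$ in $\sum_{i \geq 0} h^{i+1}(f_i + H_i)$, the only $f$-term contributing at this order is $f_l$ itself (since $f_i$ with $i \neq l$ appears only in the explicit factor $h^{i+1}$, i.e., at order $i+1 \neq l+1$), while $H_i$ for $i > l$ starts at order $h^{i+1} \geq h^{l+2}$ and is irrelevant. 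Setting the resulting coefficient to zero and solving for $f_l$ yields exactly the claimed formula.

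The main obstacle will be the bookkeeping of the implicit $h$-dependence inside $H_l$: one must verify that after the inductive substitution all contributions to order $l+1$ are exhausted by the terms $h^{i+1} H_i$ with $0 \leq i \leq l$, so that the operator $\mathrm{order}_{l+1}$ appearing in the statement is well-defined and the recursion closes with no reference to $f_l$ or higher on the right-hand side. Once this self-consistency is in hand, the proposition follows directly from the order-$(l+1)$ matching of the two series.
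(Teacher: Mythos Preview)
Your proposal is correct and follows essentially the same approach as the paper: match the Taylor expansion (\ref{taylor_power}) against one Euler step, observe that the $H_l$'s carry hidden $h$-dependence through the time derivatives along $\tilde f$, and then extract the order-$(l+1)$ coefficient. The paper's argument is in fact just the one-paragraph observation immediately preceding the proposition; your inductive bookkeeping and the explicit check that the order-$(l+1)$ contribution involves only $f_0,\dots,f_{l-1}$ supply details the paper leaves implicit.
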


Let us use (\ref{corrections}) to compute the first order correction in the modified equation for the Euler explicit method:

\begin{example}\label{first_order}
Suppose $f=-\nabla E$, then the first order correction is 
$$
f_1(\theta) 
= -\frac12 L_{2,0}(\theta)
= -\frac14 \frac d{d\theta} \|\nabla E(\theta)\|^2
$$
which is the only correction we use in Theorem \ref{main_thm}. The remarkable thing is that now the first two terms of  ($\ref{modified}$) can be understood as the gradient of a modified function, yielding the following form for the modified equation (\ref{modified_eq}):
$$
\dot \theta = -\nabla \tilde E + \mathcal O(h^2),
$$
with $\tilde E = E + \frac h4 \|\nabla E\|^2$, which is the main result of Theorem \ref{main_thm}.
\end{example}

\subsection{Geometry of implicit gradient regularization}\label{sec:geomemtry_of_igr}
In this section, we provide all the details for a proof of Proposition \ref{prop:metric_relation} and for our claim concerning the relationship between the loss surface slope and the implicit gradient regularizer, which we package in Corollary \ref{slope_cor}. The geometry underlying implicit gradient regularization makes it apparent that gradient descent has a bias toward flat minima \citep{broadminima, hochreiter1997flat}.

To begin with, consider a loss $E$ over the parameter space $\theta\in\mathbb R^m$. The loss surface is defined as the graph of the loss:
$$
S = \{(\theta, E(\theta)):\: \theta\in\mathbb R^m\}\subset \R^{m+1}.
$$
It is a submanifold of $\mathbb R^{m+1}$ of co-dimension 1, which means that the space of directions orthogonal to $S$ at a given point $(\theta, E(\theta))$ is spanned by a single unit vector, the normal vector $N(\theta)$ to $S$ at $(\theta, E(\theta))$. There is a natural parameterization for surfaces given by the graphs of functions, the Monge parameterization, where the local chart is the parameter plane: $\theta\rightarrow (\theta, E(\theta))$. Using this parameterization it is easy to see that the tangent space to $S$ at $(\theta, E(\theta))$ is spanned by the tangent vectors:
$$
v_i(\theta) = (0,\dots,1,\dots,0, \nabla_{\theta_i}E(\theta)),
$$
for $i=1,\dots,m$ and where the $1$ is at the $i^{th}$ position. Now that we have the tangent vectors, we can verify that the following vector is the normal vector, since its inner product with all the tangent vectors is zero and its norm is one:
$$
N(\theta) = \frac1{\sqrt{1 + \|\nabla E(\theta)\|^2}}(-\nabla_{\theta_1}E(\theta),\dots,-\nabla_{\theta_m}E(\theta), 1)
$$
We can compute the cosine of the angle between the normal vector $N(\theta)$ at $(\theta, E(\theta))$ and the vector $\hat z = (0, \dots, 0, 1)$ that is perpendicular to the parameter plane by taking the inner product between these two vectors, immediately yielding the following Proposition:

\begin{proposition} Consider a loss $E$ and its loss surface $S$ as above. The cosine of the angle between the normal vector $N(\theta)$ to the loss surface at $(\theta, E(\theta))$ and the vector $\hat z = (0, \dots, 0, 1)$ perpendicular to the parameter plane can be expressed in terms of the implicit gradient regularizer as follows:
\begin{equation}\label{angle}
    \langle N(\theta), \hat z\rangle = \frac1{\sqrt{1 + mR_{IG}(\theta)}}
\end{equation}
\end{proposition}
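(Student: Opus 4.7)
The proposition is essentially a one-line direct computation, so my plan is to simply carry out the inner product and then recognize the resulting expression in terms of $R_{IG}$.

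First, I would compute $\langle N(\theta), \hat z\rangle$ directly from the explicit formulas already provided in the excerpt. Since $\hat z = (0, \dots, 0, 1)$, the inner product with the normal vector $N(\theta)$ just extracts the last component of $N(\theta)$, which is $\frac{1}{\sqrt{1 + \|\nabla E(\theta)\|^2}}$. So after one line we have
\begin{equation*}
\langle N(\theta), \hat z\rangle = \frac{1}{\sqrt{1 + \|\nabla E(\theta)\|^2}}.
\end{equation*}

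Second, I would translate $\|\nabla E(\theta)\|^2$ into $R_{IG}(\theta)$ using the defining Equation~\ref{regterm}. By definition $R_{IG}(\theta) = \frac{1}{m}\sum_{i=1}^m (\nabla_{\theta_i} E(\theta))^2 = \frac{1}{m}\|\nabla E(\theta)\|^2$, so $\|\nabla E(\theta)\|^2 = m\,R_{IG}(\theta)$. Substituting yields exactly Equation~\ref{angle}.

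There is no real obstacle here: the verification that $N(\theta)$ as written is indeed the unit normal (orthogonal to each tangent vector $v_i$ and of unit norm) has already been done just before the proposition is stated, so nothing more than a substitution is needed. The only expository choice is whether to explicitly recall the orthogonality check for $N(\theta)$ or to take it as given; I would simply cite the construction above the proposition and present the two-line calculation. This also sets up Corollary~\ref{slope_cor} cleanly, since $\operatorname{slope}(\theta) = \tan\alpha(\theta)$ can then be rewritten via $\cos\alpha(\theta) = 1/\sqrt{1 + m R_{IG}(\theta)}$ to give $\operatorname{slope}^2(\theta) = m R_{IG}(\theta) = \|\nabla E(\theta)\|^2$, which is precisely what Proposition~\ref{prop:metric_relation} needs.
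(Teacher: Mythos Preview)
Your proposal is correct and matches the paper's approach exactly: the paper simply says that taking the inner product between $N(\theta)$ and $\hat z$ ``immediately yields'' the proposition, and your two-line computation (extract the last coordinate of $N(\theta)$, then substitute $\|\nabla E(\theta)\|^2 = m R_{IG}(\theta)$) is precisely that immediate verification.
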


Now observe that if $\langle N(\theta), \hat z\rangle$ is zero this means that the tangent plane to $S$ at $(\theta, E(\theta))$ is orthogonal to the parameter space, in which case the loss surface slope is maximal and infinite at this point!
On the contrary, when $\langle N(\theta), \hat z\rangle$ is equal to 1, the tangent plane at $(\theta, E(\theta))$ is parallel to the parameter plane, making $S$ look like a plateau in a neighborhood of this point. 

Let us make precise what we mean by loss surface slope. First notice that the angle between $\hat z$ (which is the unit vector normal to the parameter plane) and $N(\theta)$ (which is the vector normal to the tangent plane to $S$) coincides with the angle between this two planes. We denote by $\alpha(\theta)$ this angle:
\begin{equation}\label{angel_def}
\alpha(\theta) = \arccos\langle N(\theta), \hat z\rangle.
\end{equation}
Now, we define the \emph{loss surface slope} at $(\theta, E(\theta))$ by the usual formula
\begin{equation}\label{slope}
\operatorname{slope}(\theta) = \tan \alpha(\theta).
\end{equation}
As we expect, when the loss surface slope is zero this means that the tangent plane to the loss surface is parallel to the parameter plane (i.e., $\alpha(\theta) = 0$), while when the slope goes to infinity it means the tangent plane is orthogonal to the parameter plane (i.e., $\alpha(\theta) = \pi/2$).

The following corollary makes it clear that implicit gradient regularization in gradient descent orients the parameter search for minima toward flatter regions of the parameter space, or flat minima, which have been found to be more robust and to possess more generalization power (see \citet{broadminima, hochreiter1997flat}):

\begin{corollary}\label{slope_cor}
The slope of the loss surface $S$ at $(\theta, E(\theta))$ can be expressed in terms of the implicit gradient regularizer as follows:
\begin{equation}
    \operatorname{slope}(\theta) = \sqrt{mR_{IG}(\theta)}
\end{equation}
\end{corollary}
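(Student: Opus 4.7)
The plan is a one-line trigonometric reduction from the Proposition immediately preceding the corollary, which has already pinned down $\cos \alpha(\theta) = \langle N(\theta), \hat z\rangle$ in closed form via Equation \ref{angle}. Since the loss surface slope is defined as $\tan \alpha(\theta)$ in Equations \ref{angel_def}--\ref{slope}, the whole task reduces to converting the known cosine into the corresponding tangent.

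Concretely, I would start by squaring Equation \ref{angle} to obtain $\cos^2 \alpha(\theta) = 1/(1 + m R_{IG}(\theta))$, then apply the identity $\tan^2 \alpha = \sec^2 \alpha - 1$ to get
\begin{equation*}
\tan^2 \alpha(\theta) \;=\; (1 + m R_{IG}(\theta)) - 1 \;=\; m R_{IG}(\theta).
\end{equation*}
Taking square roots then yields $\operatorname{slope}(\theta) = \sqrt{m R_{IG}(\theta)}$, which is the claim.

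The only subtlety is fixing the sign when taking the square root. The last coordinate of $N(\theta)$ is $1/\sqrt{1 + \|\nabla E(\theta)\|^2} > 0$, so $\langle N(\theta), \hat z\rangle > 0$, placing $\alpha(\theta)$ in $[0, \pi/2)$; on that interval $\tan$ is non-negative, so the positive root is the correct one. There is no real obstacle here: the corollary is essentially a trigonometric restatement of the preceding Proposition, and the geometric content (loss surface slope equals the square root of $m$ times the IGR penalty) is entirely carried by Equation \ref{angle} together with the observation that $m R_{IG}(\theta) = \|\nabla E(\theta)\|^2$.
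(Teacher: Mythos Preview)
Your proposal is correct and follows essentially the same route as the paper: both invert Equation~\ref{angle} to get $1/\cos^2\alpha(\theta) = 1 + mR_{IG}(\theta)$, apply the identity $\sec^2\alpha = 1 + \tan^2\alpha$, and take the square root. Your additional remark on the sign of the root (via $\alpha(\theta)\in[0,\pi/2)$) is a nice touch that the paper leaves implicit.
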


\begin{proof}
From (\ref{angle}) and the fact that $\cos\alpha(\theta) = \langle N(\theta),\,\hat z\rangle$, we have that 
$$\frac1{\cos^2\alpha(\theta)} = 1 + mR_{IG}(\theta).$$
Now basic trigonometry tells us that in general $1/\cos^2\alpha = 1 + \tan^2\alpha$, which implies here that $\tan^2\alpha(\theta) = mR_{IG}(\theta)$. Taking the square root of this last expression finishes the proof.
\end{proof}

\begin{remark}
Corollary \ref{slope_cor} gives us a very clear understanding of implicit gradient regularization. Namely, the quantity that is regularized is nothing other than the square of the slope $R_{IG}(\theta) = \frac1m \operatorname{slope}^2(\theta)$ and the modified loss becomes 
$\tilde E(\theta) = E(\theta) + \frac h4 \operatorname{slope}^2(\theta).$ For  \emph{explicit} gradient regularization, we can now also understand the explicitly regularized loss in terms of the slope:
$$
E_\mu(\theta) = E(\theta) + \mu \operatorname{slope}^2(\theta),
$$
This makes it clear that this explicit regularization drives the model toward flat minima (with zero slope).
\end{remark}

\begin{remark}
There is another connection between IGR and the underlying geometry of the loss surface through the metric tensor. It is a well-known fact from Riemannian geometry that the metric tensor $g(\theta)$ for surfaces in the Monge parameterization $\theta\rightarrow (\theta, E(\theta))$ has the following form:
$$
g_{ij}(\theta) = \delta_{ij} + \nabla_{\theta_i}E(\theta)\nabla_{\theta_j}E(\theta),
$$
where $\delta_{ij}$ is the Kronecker delta. Now the determinant $|g|$, which defines the local infinitesimal volume element on the loss surface, can also be expressed in terms of the implicit gradient regularizer: Namely,
$
|g(\theta)| = 1 + \|\nabla E(\theta)\|^2 = 1 + m R_{IG}(\theta).
$
Solving this equation above for $R_{IG}$, we obtain a geometric definition for the implicit gradient regularizer:
\begin{equation}
R_{IG}(\theta) = \frac1m (|g(\theta)| - 1),
\end{equation}
which incidentally is zero when the surface looks like an Euclidean space. 
\end{remark}

We conclude this section by showing that the increase in parameter norm can be bounded by the loss surface slope at each gradient descent step.

\begin{proposition}\label{prop:norm_bound} Let $\theta_n$ be the parameter vector after $n$ gradient descent updates. Then the increase in parameter norm is controlled by the loss surface slope as follows:
\begin{equation}\label{norm_bound}
  \big  | \| \theta_{n+1} \| - \| \theta_n \|\big | \leq  h   \operatorname{slope}(\theta_n)
\end{equation}
\end{proposition}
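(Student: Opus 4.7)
The plan is to combine the gradient descent update rule with the reverse triangle inequality, then identify $\|\nabla E(\theta_n)\|$ with the loss surface slope via Corollary \ref{slope_cor}.

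First, I would start from the gradient descent update $\theta_{n+1} = \theta_n - h\nabla E(\theta_n)$ and apply the reverse triangle inequality to the two norms:
\begin{equation*}
\big|\|\theta_{n+1}\| - \|\theta_n\|\big| \leq \|\theta_{n+1} - \theta_n\| = h\,\|\nabla E(\theta_n)\|.
\end{equation*}
This reduces the claim to showing $\|\nabla E(\theta_n)\| = \operatorname{slope}(\theta_n)$.

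Second, I would invoke the geometric identity already proved in Corollary \ref{slope_cor}, namely $\operatorname{slope}(\theta) = \sqrt{m\,R_{IG}(\theta)}$. Using the definition of the implicit gradient regularizer from Equation \ref{regterm}, $R_{IG}(\theta) = \frac1m \sum_{i=1}^m (\nabla_{\theta_i} E(\theta))^2 = \frac1m \|\nabla E(\theta)\|^2$, so $m R_{IG}(\theta) = \|\nabla E(\theta)\|^2$ and therefore $\operatorname{slope}(\theta) = \|\nabla E(\theta)\|$. Chaining this with the inequality above yields the bound $|\|\theta_{n+1}\| - \|\theta_n\|| \leq h\,\operatorname{slope}(\theta_n)$, completing the argument.

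There is essentially no hard part: the proof is a one-line application of the reverse triangle inequality followed by direct substitution from the definition of $R_{IG}$ and Corollary \ref{slope_cor}. The only thing worth flagging is that the bound is tight precisely when $\theta_n$ and $\nabla E(\theta_n)$ are parallel, which is why the inequality (rather than equality) appears in the statement.
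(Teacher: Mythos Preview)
Your proposal is correct and matches the paper's own proof: apply the (reverse) triangle inequality to the gradient descent update to get $\big|\|\theta_{n+1}\|-\|\theta_n\|\big|\le h\|\nabla E(\theta_n)\|$, then identify $\|\nabla E(\theta_n)\|$ with $\operatorname{slope}(\theta_n)$. If anything, your use of the reverse triangle inequality is slightly cleaner, since it directly yields the two-sided bound with the absolute value that the statement requires.
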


\begin{proof}
The triangle inequality applied to one step of gradient descent $\| \theta_{n+1} \| = \| \theta_n - h\nabla E(\theta_n) \|$ yields $( \| \theta_{n+1} \| - \| \theta_n \|) \leq h \| \nabla E(\theta_n) \|$, which concludes the proof, since the gradient norm coincides with the loss surface slope.
\end{proof}

We now prove a proposition that relates the geometry of the minima of $E$ and $\tilde E$.

\begin{proposition}\label{prop:minima}
Let $E$ be a non-negative loss. Then local minima of $E$ are local minima of the modified loss $\widetilde E$. Moreover, the two losses have the same interpolating solutions (i.e., locus of zeros).
\end{proposition}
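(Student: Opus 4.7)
The plan is to exploit two elementary facts: the modified loss is $\widetilde E = E + \tfrac{h}{4}\|\nabla E\|^2$, which is a sum of $E$ and a nonnegative term, and this added term vanishes precisely at critical points of $E$. Both parts of the proposition follow by chasing these observations.

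For the first claim, I would start with a local minimum $\theta^\star$ of $E$. By a necessary first-order condition, $\nabla E(\theta^\star) = 0$, so $R_{IG}(\theta^\star) = 0$ and hence $\widetilde E(\theta^\star) = E(\theta^\star)$. For any $\theta$ in a neighbourhood of $\theta^\star$ on which $E(\theta) \geq E(\theta^\star)$, the nonnegativity of $\|\nabla E(\theta)\|^2$ gives
\begin{equation*}
\widetilde E(\theta) \;=\; E(\theta) + \tfrac{h}{4}\|\nabla E(\theta)\|^2 \;\geq\; E(\theta^\star) \;=\; \widetilde E(\theta^\star),
\end{equation*}
so $\theta^\star$ is a local minimum of $\widetilde E$ on the same neighbourhood.

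For the second claim, I would show that the zero sets of $E$ and $\widetilde E$ coincide. If $E(\theta) = 0$, then since $E \geq 0$, $\theta$ is a global minimum of $E$, so $\nabla E(\theta) = 0$ and hence $\widetilde E(\theta) = 0$. Conversely, if $\widetilde E(\theta) = 0$, both summands $E(\theta) \geq 0$ and $\tfrac{h}{4}\|\nabla E(\theta)\|^2 \geq 0$ must vanish, giving $E(\theta) = 0$.

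There is no real obstacle here; the only subtlety is noting that while $E \geq 0$ is used for the interpolation statement (so that the zero set of $E$ coincides with its set of global minima), the first claim about local minima requires only that $E$ be differentiable enough for the first-order condition $\nabla E(\theta^\star) = 0$ to hold at interior local minima. I would make this separation explicit in the write-up so the hypotheses are used where they are genuinely needed.
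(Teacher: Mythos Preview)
Your proposal is correct and follows essentially the same argument as the paper: both use that $\nabla E(\theta^\star)=0$ at a local minimum so that $\widetilde E(\theta^\star)=E(\theta^\star)$, then add the nonnegative term $\tfrac{h}{4}\|\nabla E(\theta)\|^2$ to the local inequality $E(\theta)\geq E(\theta^\star)$; and both handle the zero-locus claim by noting that zeros of a nonnegative $E$ are global minima (hence critical points), while $\widetilde E(\theta)=0$ forces both nonnegative summands to vanish. Your closing remark separating which hypothesis is used where is a nice clarification not present in the paper, but the underlying proof is the same.
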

\begin{proof}
Since $\nabla \widetilde E = \nabla E + \frac h2 D^2E\nabla E$, where $D^2E$ is the Hessian of $E$, it is clear that a critical point of $E$ is a critical point of $\widetilde E$. Suppose now that $\theta^*$ is a local minimum of $E$. This means that there is a neighbourhood of $\theta^*$ where $E(\theta) \geq E(\theta^*)$. We can add $\frac h4 \|\nabla E(\theta)\|^2$ on the left of this inequality, since it is a positive quantity, and we can also add $\frac h4 \|\nabla E(\theta^*)\|^2$ on the right of the inequality, since it is zero. This shows that in a neighborhood of $\theta^*$, we also have that $\widetilde E(\theta) \geq \widetilde E(\theta^*)$. This means that $\theta^*$ is also a local minimum of $\widetilde E$. Finally, let us see that $E$ and $\widetilde E$ share the same locus of zeros. Let $\theta$ be a zero of $E$. Since $E$ is non-negative and $E(\theta) = 0$ then $\theta$ is a global minima, which implies that $\nabla E(\theta) =0$ also, and hence $\widetilde{E}(\theta) = 0$. Now for positive $E$, $\widetilde{E}(\theta) = 0$ trivially implies $E(\theta) = 0.$
\end{proof}

\subsection{The NTK connection}\label{appendix:ntk}

In the case of the least square loss, the modified equation as well as the implicit gradient regularizer take a very particular form, involving the Neural Tangent Kernel (NTK) introduced in \citet{ntk}.

\begin{proposition}
Consider a model $f_\theta:\mathbb R^{d}\rightarrow \mathbb R^{c}$ with parameters $\theta \in \mathbb R^m$ and  with least square loss $E(\theta) = \sum_{i=1}^n \| f_\theta(x_i) - y_i \|^2$. The modified loss can then be expressed as
\begin{equation}
    \tilde E(\theta) = E(\theta) + h \sum_{i,j=1}^n\epsilon_i^T(\theta)K_\theta(x_i, x_j)\epsilon_j(\theta), \label{NTK_modified_loss}
\end{equation}
where $K_\theta$ is the Neural Tangent Kernel defined by
\begin{equation}
    K_\theta(x_i, x_j) :=  \nabla\epsilon_i(\theta)^T \nabla\epsilon_j(\theta),
\end{equation}
where $\epsilon_k(\theta) = f_\theta(x_k) - y_k \in \mathbb R^c$ is the error vector on data point $x_k$. In the particular case when the model output is one-dimensional, we can write the modified loss compactly as follows:
\begin{equation}
    \tilde E(\theta) = \epsilon(\theta)^T( 1 + K_\theta)\epsilon(\theta), 
\end{equation}
\end{proposition}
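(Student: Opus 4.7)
The plan is to apply Theorem \ref{main_thm} directly to the least-squares loss and expose the NTK structure by expanding the squared gradient norm via the chain rule. Theorem \ref{main_thm} gives $\widetilde E(\theta) = E(\theta) + \frac{h}{4}\|\nabla_\theta E(\theta)\|^2$, so the entire statement reduces to rewriting $\|\nabla_\theta E\|^2$ in terms of $K_\theta$ and the error vectors $\epsilon_i(\theta)$.

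First I would differentiate $E(\theta) = \sum_{i=1}^n \epsilon_i(\theta)^T\epsilon_i(\theta)$ with the chain rule. Letting $J_i(\theta) := \nabla_\theta \epsilon_i(\theta) \in \mathbb{R}^{m\times c}$ denote the parameter Jacobian of the $i^{\mathrm{th}}$ residual (columns indexed by output components), one obtains $\nabla_\theta E = 2\sum_{i=1}^n J_i(\theta)\,\epsilon_i(\theta) \in \mathbb{R}^m$.

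Second, square this vector and distribute over the double sum:
\begin{equation*}
\|\nabla_\theta E\|^2 = 4 \Bigl(\sum_i J_i\epsilon_i\Bigr)^T\!\Bigl(\sum_j J_j\epsilon_j\Bigr) = 4\sum_{i,j=1}^n \epsilon_i^T\bigl(J_i^T J_j\bigr)\epsilon_j = 4\sum_{i,j=1}^n \epsilon_i(\theta)^T K_\theta(x_i,x_j)\,\epsilon_j(\theta),
\end{equation*}
where the last step is just the definition $K_\theta(x_i,x_j) = \nabla\epsilon_i(\theta)^T\nabla\epsilon_j(\theta)$. Multiplying by $h/4$ and adding $E(\theta)$ produces Equation \ref{NTK_modified_loss}. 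For the scalar-output case $c=1$, the $\epsilon_i$'s assemble into a vector $\epsilon(\theta)\in\mathbb{R}^n$ and the kernel values into a Gram matrix $K_\theta \in \mathbb{R}^{n\times n}$, so both $\sum_i \epsilon_i^2 = \epsilon^T\epsilon$ and $\sum_{i,j}\epsilon_i K_\theta(x_i,x_j)\epsilon_j = \epsilon^T K_\theta \epsilon$ collapse into quadratic forms, giving the claimed compact expression (with the identity understood in place of the scalar $1$, and carrying the $h$ from the preceding line).

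Honestly there is no real obstacle here: the result is a direct computation once Theorem \ref{main_thm} is in hand. The only point requiring care is bookkeeping of the Jacobian conventions, specifically making sure that $J_i^T J_j$ is a $c\times c$ matrix that correctly sandwiches between $\epsilon_i^T$ on the left and $\epsilon_j$ on the right, which is exactly what the paper's definition of $K_\theta$ prescribes.
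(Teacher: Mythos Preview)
Your proposal is correct and follows essentially the same route as the paper: differentiate $E$ via the chain rule to get $\nabla E = 2\sum_i \nabla\epsilon_i\,\epsilon_i$, expand $\|\nabla E\|^2$ as a double sum to expose $K_\theta(x_i,x_j)=\nabla\epsilon_i^T\nabla\epsilon_j$, and substitute into the modified loss from Theorem~\ref{main_thm}. Your parenthetical about carrying the $h$ into the compact scalar-output formula is well taken; the paper's displayed expression $\epsilon^T(1+K_\theta)\epsilon$ silently drops it.
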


\begin{proof}
Let us begin by computing the gradient
\begin{eqnarray*}
\nabla E(\theta) 
& = & \sum_{i=1}^n \nabla \|f_\theta(x) - y\|^2 \\
& = & 2\sum_{i=1}^n \langle \nabla_\theta f_\theta(x), (f_\theta(x) -y)\rangle \\
& = & 2 \sum_{i=1}^n \nabla \epsilon_i(\theta)\epsilon_i(\theta),
\end{eqnarray*}
since $\nabla \epsilon_i(\theta) = \nabla_\theta f_\theta(x_i)$ is a matrix. Using that result, we can compute the implicit gradient regularizer in this case:
\begin{eqnarray*}
R_{IG}(\theta) 
& = & \frac1m \langle \nabla E(\theta), \nabla E(\theta)\rangle \\
& = & \frac 4m \sum_{i,j=1}^n \langle \nabla\epsilon_i(\theta) \epsilon_i(\theta), \nabla\epsilon_j(\theta)\epsilon_j(\theta)\rangle \\
& = & \frac4m  \sum_{i,j=1}^n \epsilon_i(\theta)^T \nabla \epsilon_i(\theta)^T \nabla \epsilon_j(\theta)\epsilon_j(\theta) \\
& = & \frac4m \sum_{i,j=1}^n \epsilon_i(\theta)^TK_\theta(x_i, x_j)\epsilon_j(\theta),
\end{eqnarray*}

which concludes the first part of the proof. Now when the model output is one-dimensional, then the $\epsilon_i(\theta)$ are no longer vectors but scalars. We can then collect them into a single vector $\epsilon(\theta) = (\epsilon_1(\theta), \dots, \epsilon_n(\theta))$. Similarly, the terms $K_\theta(x_i, x_j)$ are no longer matrices but scalars, allowing us to collect them into a single matrix $K_\theta$. In this notation we now see that the original loss can be written as $E = \epsilon^T\epsilon$, yielding $\tilde E = \epsilon^T(1 + K_\theta)\epsilon$ for the modified loss, concluding the proof.

\end{proof}

For the least square loss, we see that the IGR is expressed in terms of the NTK. Therefore the NTK entries will tend to be minimized during gradient descent. In particular, the cross terms $K_\theta(x_i, x_j)$ will be pushed to zero. This means that gradient descent will push the maximum error direction $\nabla \epsilon_k(\theta)$ at different data points to be orthogonal to each other (i.e., $ \nabla \epsilon_k(\theta)^T\nabla \epsilon_l(\theta) \simeq 0$). This is good, since a gradient update is nothing other than a weighted sum of these error directions. If they are orthogonal, this means that the gradient update contribution at point $x_k$ will not affect the gradient update contribution at point $x_l$, so the individual data point corrections are less likely to nullifying each other as gradient descent progresses.

\subsection{Experiment details for the 2-d linear model}\label{appendix:2d}

In this section, we provide supplementary results (Figure \ref{fig:ricochet}), hyper-parameter values (Table \ref{tab:2d}) and modified loss derivations for the two parameter model described in Section \ref{sec:deep_linear_networks}. This model has a loss given by: 
\begin{equation}
E = \left(y - abx \right)^2/2 
\end{equation}
where $a, b \in \mathbb{R}$ are the model parameters and $x, y\in \mathbb{R}$ is the training data. 

\begin{table}[b]
  \caption{Experiment hyper-parameters for the two-parameter model}
  \label{tab:2d}
  \centering
  \begin{tabular}{lll}
    \toprule
    \cmidrule(r){1-2}
         & Initial point I     & Initial point II \\
    \midrule
    $(a_0, b_0)$ & $(2.8, 3.5)$  & $(75, 74.925)$    \\
    $(x, y)$ & $(1, 0.6)$  & $(1, 0.6)$    \\
    
    $h_S$     & $10^{-3}$  & $10^{-6}$     \\
    $h_M$     & $2.5\times10^{-2} $      &  $0.5\times10^{-4}$    \\
    $h_L$     & $1.8\times10^{-1} $      &   n/a   \\
    $\mu$     & $0.5$       &  $4.1\times10^{-2}$ \\
    $h_{Euler}$     & $10^{-4}$       &  $5\times10^{-7}$ \\
    
    \bottomrule
  \end{tabular}
\end{table}

The implicit regularization term for this model can be calculated using Equation \ref{regterm}, yielding:
\begin{equation}
R_{IG} = \frac{(\nabla_a E)^2 + (\nabla_b E)^2}{2} = (a^2 + b^2)x^2E.
\end{equation}
The implicit regularization rate can be calculated using Equation \ref{regrate}, yielding:
\begin{equation}
\lambda = mh/4  = h/2.
\end{equation}
The modified loss can be calculated using Equation \ref{modified_loss}, yielding:
\begin{equation}
\widetilde{E} = E + \lambda R_{IG}
= E\left(1 + \lambda \left(a^2 + b^2\right)x^2\right).
\end{equation}
Here, we can see that the global minima for $E(a,b)$ (i.e. the zeros) are the same as the global minima for $\widetilde{E}(a,b)$ since $1 + \lambda \left(a^2 + b^2\right)x^2$ is positive. However, as we will see, the corresponding gradient flows are different. 

\begin{figure}
  \centering
  \includegraphics[width=0.49\linewidth]{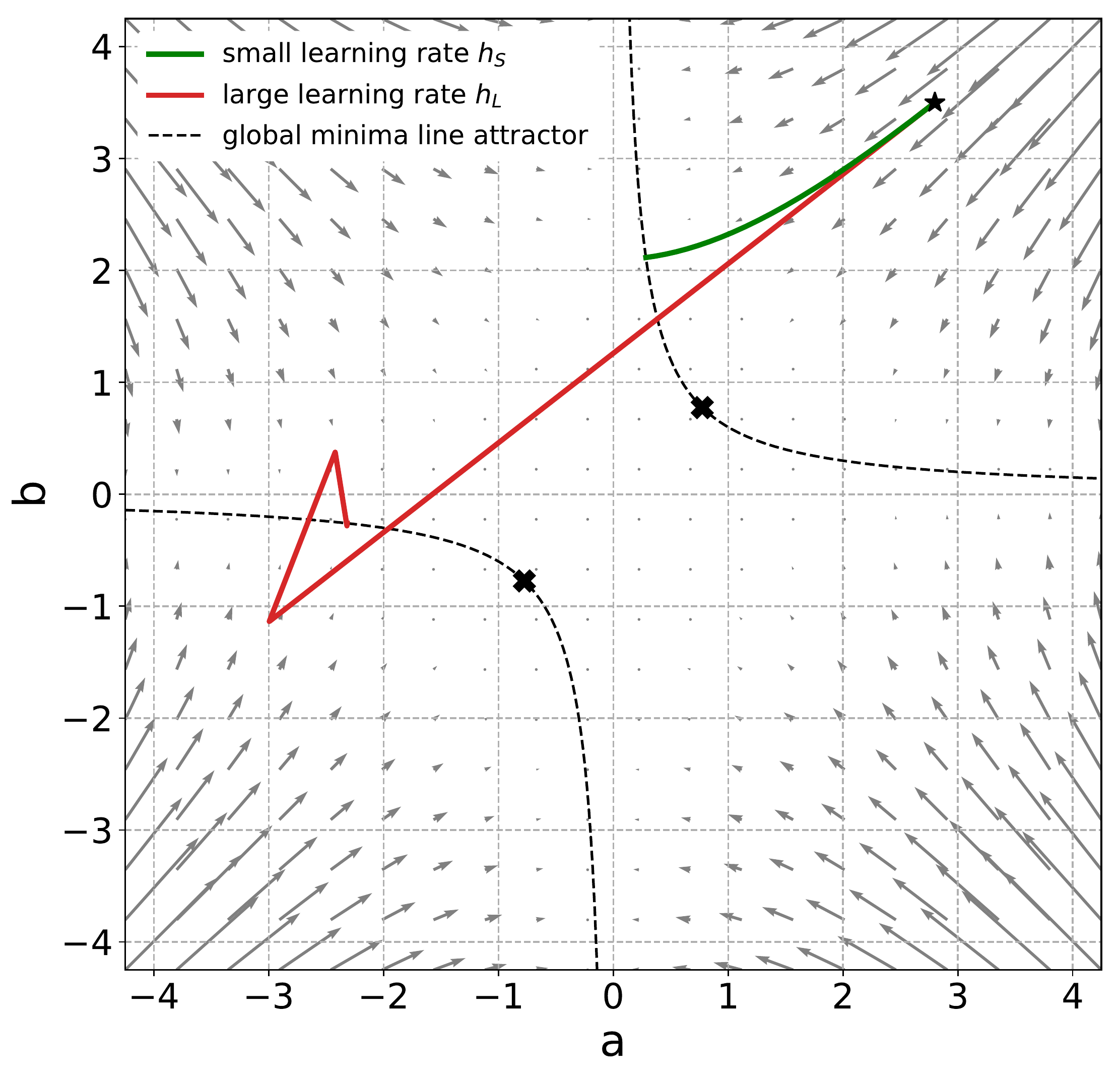}

  \caption{Gradient descent with large learning rates: In this phase space plot, the same loss surface as in Figure \ref{fig:linear} is represented, but showing a larger area of the phase space. We plot two gradient descent trajectories, originating from the same initial point ($a_0  = 2.8, b_0  = 3.5$). One has a small learning rate $h_S = 10^{-3}$ and the other has a much larger learning rate $h_L = 1.8\times10^{-1}$. The gradient descent trajectory with larger learning rate ricochets across the loss surface, stepping over line attractors until it happens to land in a low gradient region of the loss surface where it converges toward a global minima.}
  \label{fig:ricochet}
\end{figure}

The exact modified gradient flow for the modified loss is given by:
\begin{eqnarray}\label{modified_flow_2d}
\dot{a}   & = & - \nabla_a \widetilde{E}  =  - \nabla_a E\left(1 + \lambda \left(a^2 + b^2\right)x^2\right) - \lambda \left(2ax^2\right) E  \nonumber \\
\dot{b}  & = & - \nabla_b \widetilde{E}  = - \nabla_b E\left(1 + \lambda \left(a^2 + b^2\right)x^2\right) - \lambda \left(2bx^2\right) E,
\end{eqnarray}
The exact gradient flow for the original loss is given by 
\begin{eqnarray}
\dot{a}  & = &  - \nabla_a E  \nonumber \\
\dot{b}  & = & - \nabla_b E,
\end{eqnarray}
The exact numerical flow of gradient descent is given by
\begin{eqnarray}\label{gd_2d}
a_{n+1} & = & a_n - h \nabla_a E  \nonumber \\
b_{n+1} & = & b_n - h \nabla_b E,
\end{eqnarray}

where $(a_n, b_n)$ are the parameters at iteration step $n$. For this model, we have $\nabla_a E = -bx(y - abx)$ and $\nabla_b E = -ax(y - abx)$.

\begin{remark}
In vector notation, we see that the modified gradient flow equation is
$$
\dot \theta = - (1 + \lambda x^2\|\theta\|^2)\nabla E(\theta) - (2\lambda x^2 E) \theta,
$$
with $\theta = (a, b)$. The last term $- (2\lambda x^2 E) \theta$ is a central vector field re-orienting the original vector field $\nabla E$ away from the steepest slopes and toward the origin which coincides in this example with flatter regions where the minimal norm global minima are located. This phenomenon becomes stronger for parameters further away from the origin, where, coincidentally the slopes are the steepest. Specifically, $\operatorname{slope}(\theta) = \|\theta\| |x| \sqrt{2E}$.
\end{remark}

\begin{figure}
  \centering
  \includegraphics[width=0.49\linewidth]{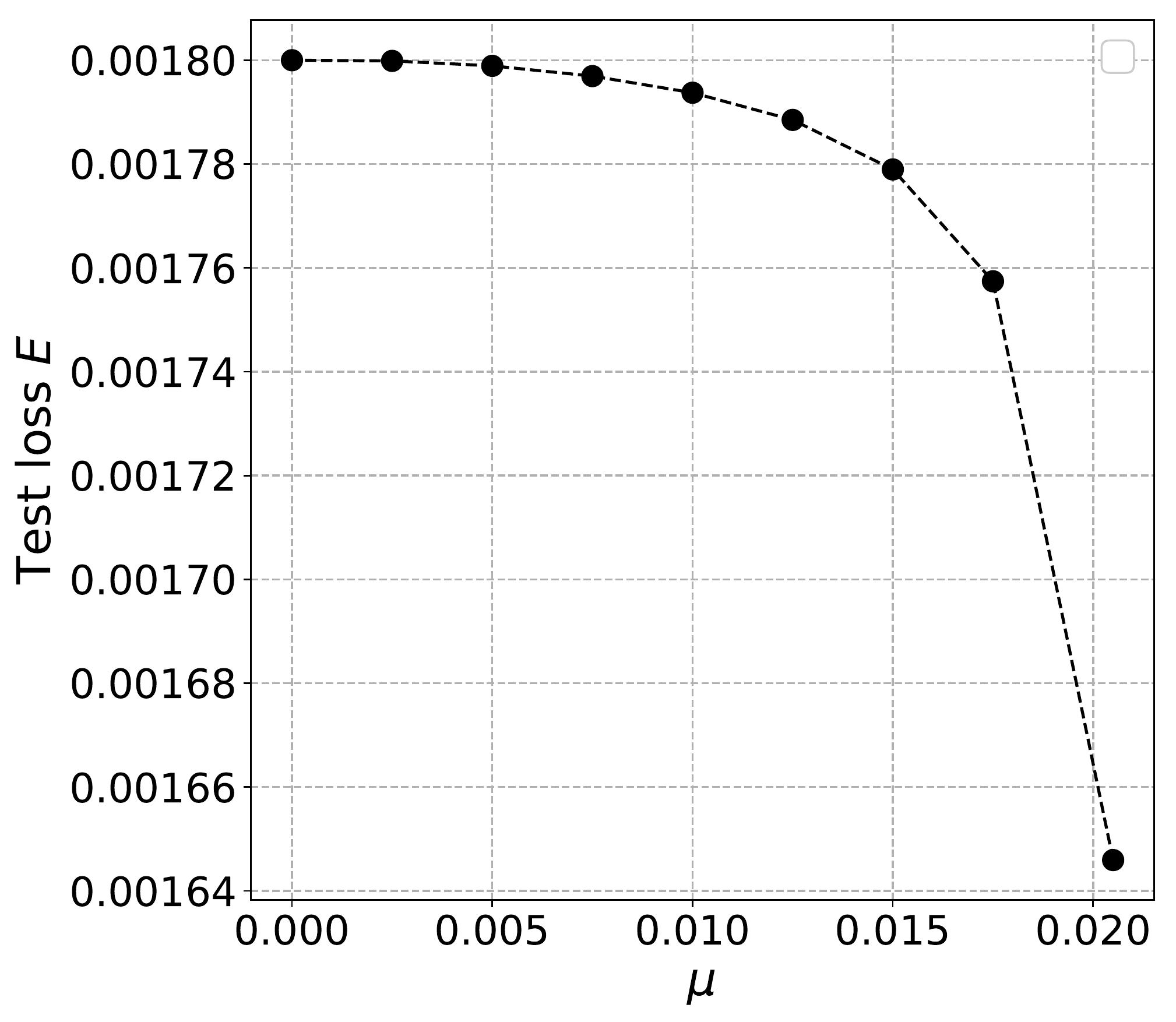}
  \caption{Test error for a 2-d linear model: We evaluate test error using a test data point given by ($x=0.5, y=3/5$) for a model trained on a training data point given by ($x=1, y=3/5$). We see that models with larger amounts of EGR have smaller test errors, consistent with Prediction \ref{con:test_error}. This happens because  gradient regularisation leads to flatter solutions, where the difference between train and test data incurs a smaller cost.}
  \label{fig:2dtest}
\end{figure}

In Figure \ref{fig:linear}a, we plot the trajectories of four different flows starting from the same initial point $(a_0, b_0)$, to illustrate the impact of IGR. We look at two initial points, chosen to illustrate the behaviour of gradient descent in different settings. The full set of hyper-parameters used for this experiment is given in Table \ref{tab:2d}. First, we calculate the numerical flow of gradient descent with a small learning rate $h_S$ using Equation \ref{gd_2d}. Next, we plot the numerical flow of gradient descent with a moderate learning rate $h_M$ using Equation \ref{gd_2d}. We then calculate the modified gradient flow by solving Equation \ref{modified_flow_2d} numerically using the Euler method, starting from initial point $(a_0, b_0)$ and using $\lambda = h_M/2$. For this numerical calculation, we use a very small Euler method step size $h_{Euler}$ so that the Euler method follows the gradient flow of the modified loss accurately. We observe that this modified flow is close to the numerical flow of gradient descent, consistent with Theorem \ref{main_thm}.

We also plot the trajectory of gradient descent for a large learning rate $h_L$, where backward analysis is no longer applicable (Fig. \ref{fig:ricochet}) and observe that gradient descent ricochets across the loss surface, stepping over line attractors until it lands in a low gradient region of the loss surface where it converges toward a global minima. This large learning rate regime can be unstable. For larger learning rates, or for different initial positions, we observe that gradient descent can diverge in this regime.

In Figure \ref{fig:linear}b, we explore the asymptotic behaviour of gradient descent by measuring $R/E$ after convergence for a range of models, all initialized at $a_0 = 2.8$ and $b_0 = 3.5$, with a range of learning rates. 

We also explore the impact of explicit gradient regularization, using Equation 8 to define the explicitly regularized modified loss for our two-parameter model:
\begin{equation}\label{eqn:mod2d}
E_{\mu} = E\left(1 + \mu \left(a^2 + b^2\right)x^2\right).
\end{equation}
We use this modified loss for gradient descent:
\begin{eqnarray}\label{gd_2d_mod}
a_{n+1} & = & a_n - h_{Euler} \nabla_a E_{\mu}  \nonumber \\
b_{n+1} & = & b_n - h_{Euler} \nabla_b E_{\mu},
\end{eqnarray}
Here, we have used a very small learning rate, $h_{Euler}$ (Table \ref{tab:2d}) and a very large value of $\mu$ (Table \ref{tab:2d}). This allows us to achieve stronger regularization, since $\mu$ can be increased to a large value where gradient descent with $h = 2\mu$ would diverge. We observe that EGR can decrease the size of $R_{IG}$ after training (Figure \ref{fig:linear}a) and can increase test accuracy (Figure \ref{fig:2dtest}). 

\subsection{Deep neural network experiment details}\label{appendix:dnn_exp}

\begin{figure}
  \includegraphics[width=0.99\linewidth]{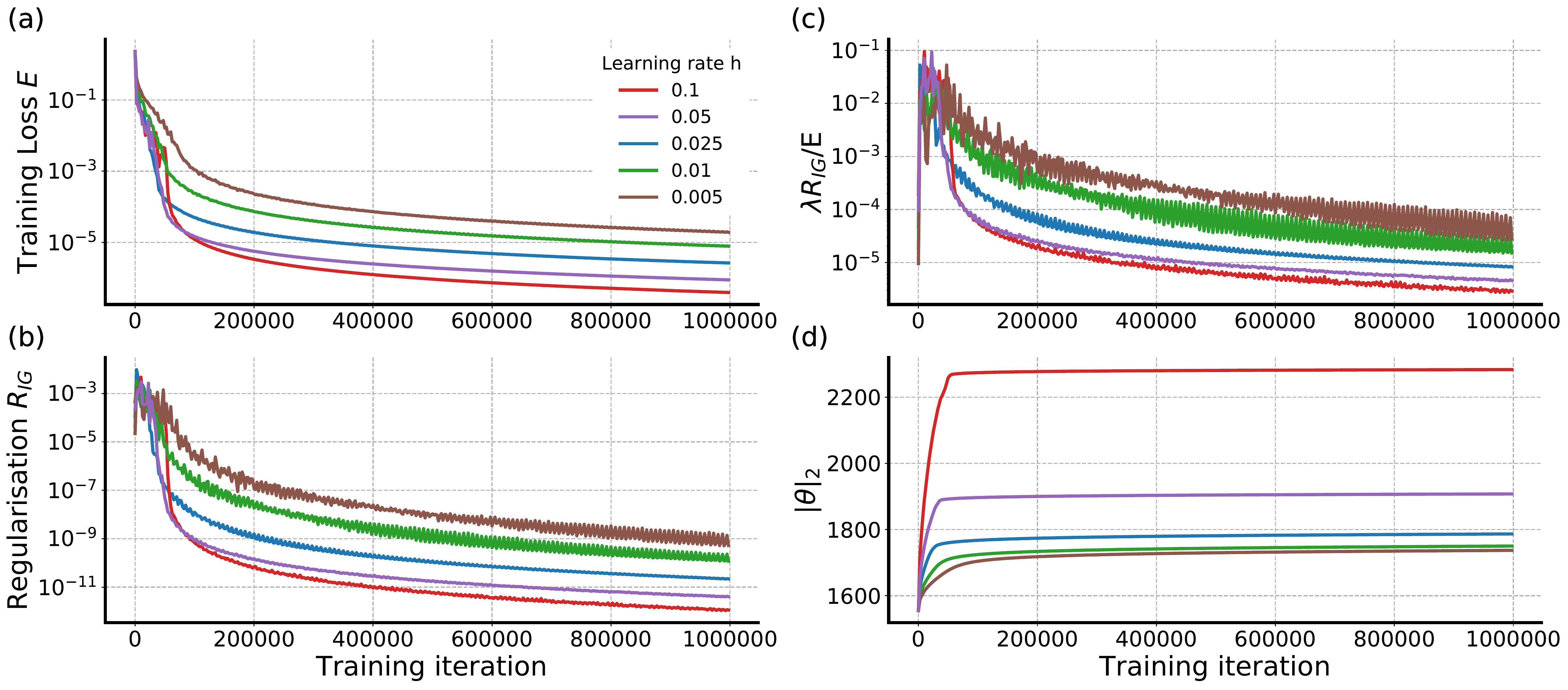}
  \caption{Implicit gradient regularization in a deep neural network: For an MLP trained to classify MNIST digits, we see that (a) the training loss $E$ and (b) the regularization value $R_{IG}$ both decrease during training, consistent with backward error analysis (Equation \ref{modified_loss}). (c) The ratio of $\lambda R_{IG}/E$ is smaller for networks that have larger learning rates, consistent with the prediction that the learning rate controls the regularization rate (Equation \ref{regrate}). (d) The parameter magnitudes grow during training, and this growth decreases as $R_{IG}$ decreases and finally plateaus before the training loss does, consistent with Proposition \ref{prop:norm_bound}.}
  \label{fig:S Training curves iteration time}
\end{figure}

\begin{figure}
  \includegraphics[width=0.99\linewidth]{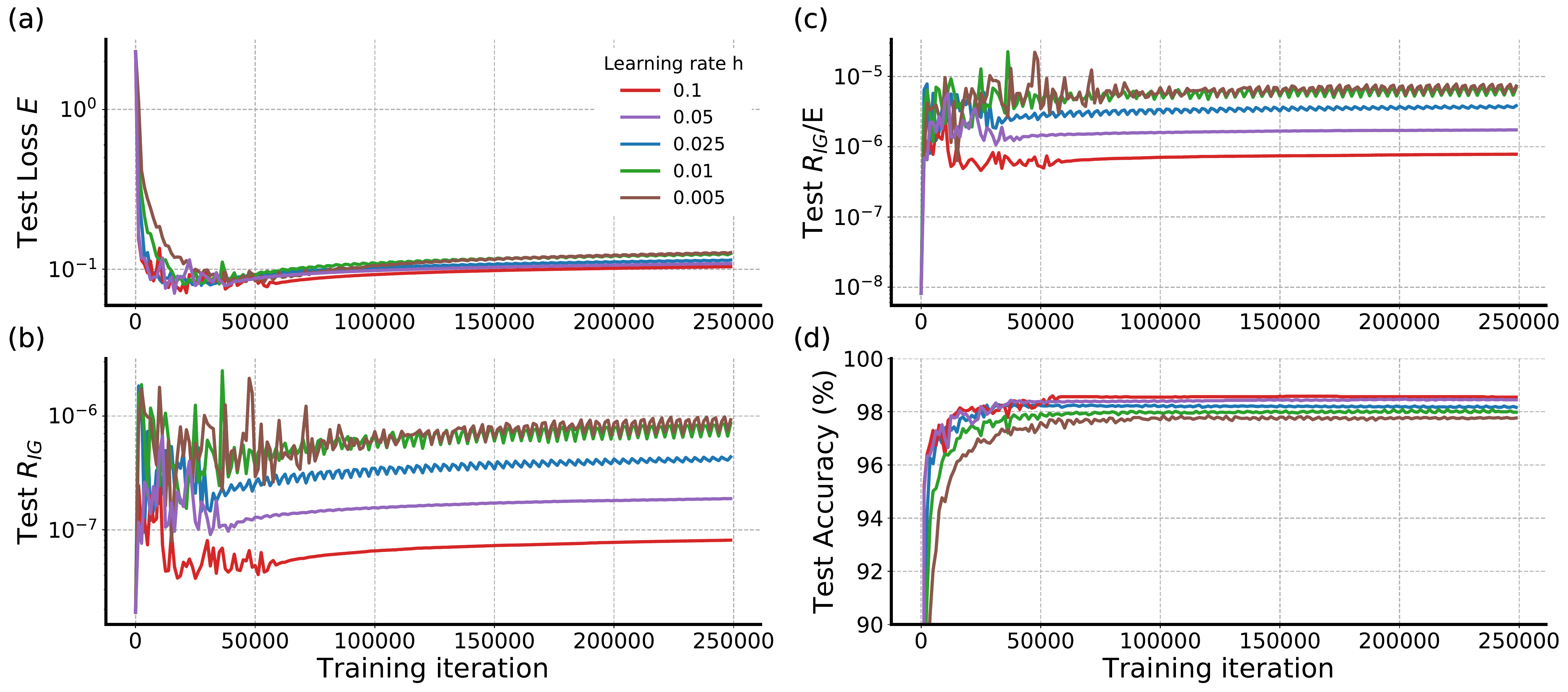}
  \caption{Implicit gradient regularization with test data: (a) The test loss $E$ and (b) the test regularisation value $R_{IG}$ both decrease initially before increasing again. (c) The ratio of $R_{IG}/E$ is smaller for networks that have larger learning rates, consistent with Equation \ref{modified_loss}. (d) The test accuracy increases during training, until it reaches a maximum value before starting to decrease again slightly. The models used in this figure are the same as those reported in Figure 2, but using test data instead of training data.}
  \label{fig:S Test curves}
\end{figure}

\begin{figure}
  \includegraphics[width=0.99\linewidth]{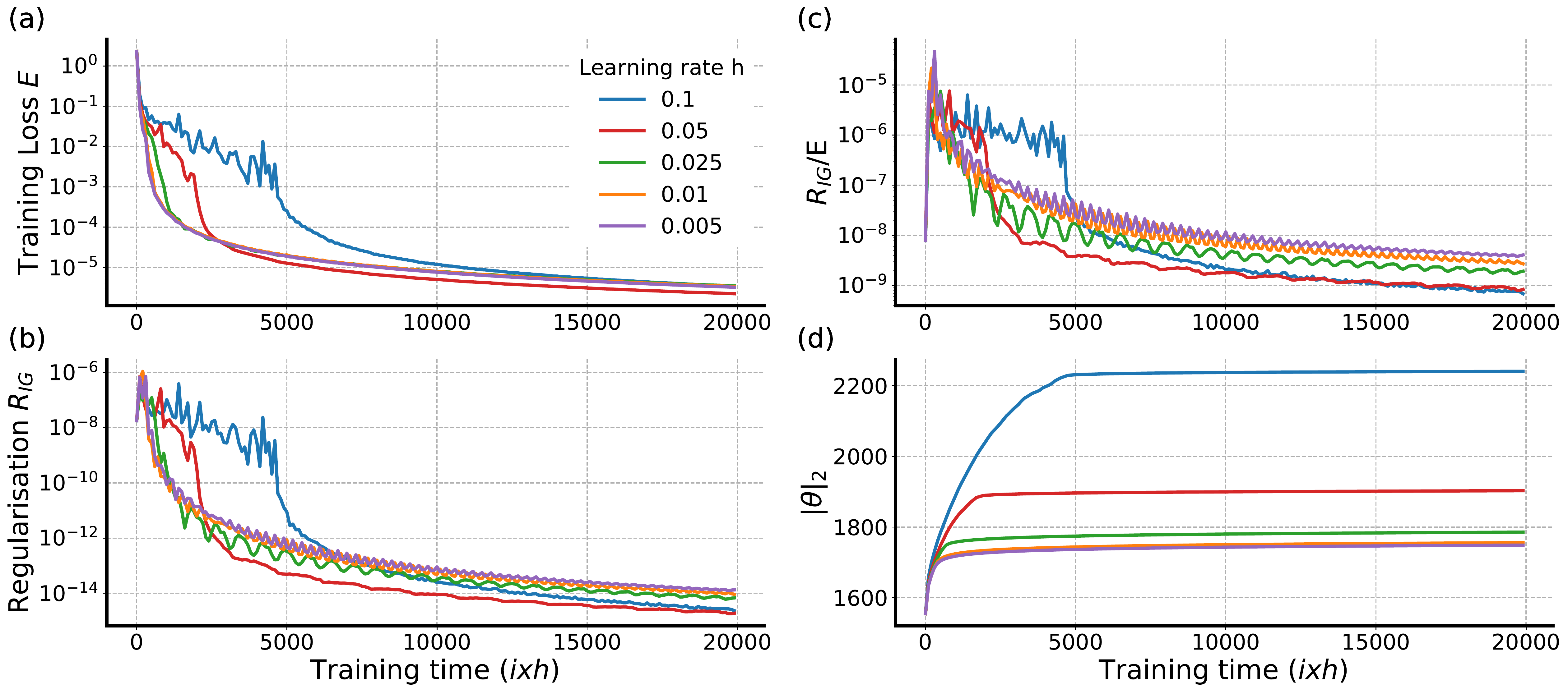}
  \caption{Implicit gradient regularization for networks trained for equal amounts of physical training time $T = i\times h$, where $i$ is the number of training iterations. Since physical training time depends on learning rate $h$, models with large learning rates are trained for far fewer iterations, and are exposed to far fewer epochs of training data than models with small learning rates. Nonetheless, following a sufficiently long period of physical training time, the ratio $R_{IG}(\theta)/E(\theta)$ is smaller for models with larger learning rates, consistent with Theorem \ref{main_thm}.}
  \label{fig:S Training curves physical time}
\end{figure}

In this section, we provide further details for the calculation of $R_{IG}(\theta)$ in a deep neural network (Figure \ref{fig:Regularization constant}, \ref{fig:robustness}, \ref{fig:S Training curves iteration time}, \ref{fig:S Test curves}, \ref{fig:S Training curves physical time}, \ref{fig:S learning_rate_network_size}, \ref{fig:S Cifar Training curves}, \ref{fig:S Cifar plots}). For all these experiments, we use JAX \citep{jax2018github} and Haiku \citep{haiku2020github} to automatically differentiate and train deep neural networks for classification. Conveniently, the loss gradients that we compute with automatic differentiation are the same loss gradients that we need for the calculation of $R_{IG}(\theta)$.

\begin{figure}
  \centering
  \includegraphics[width=0.32\linewidth]{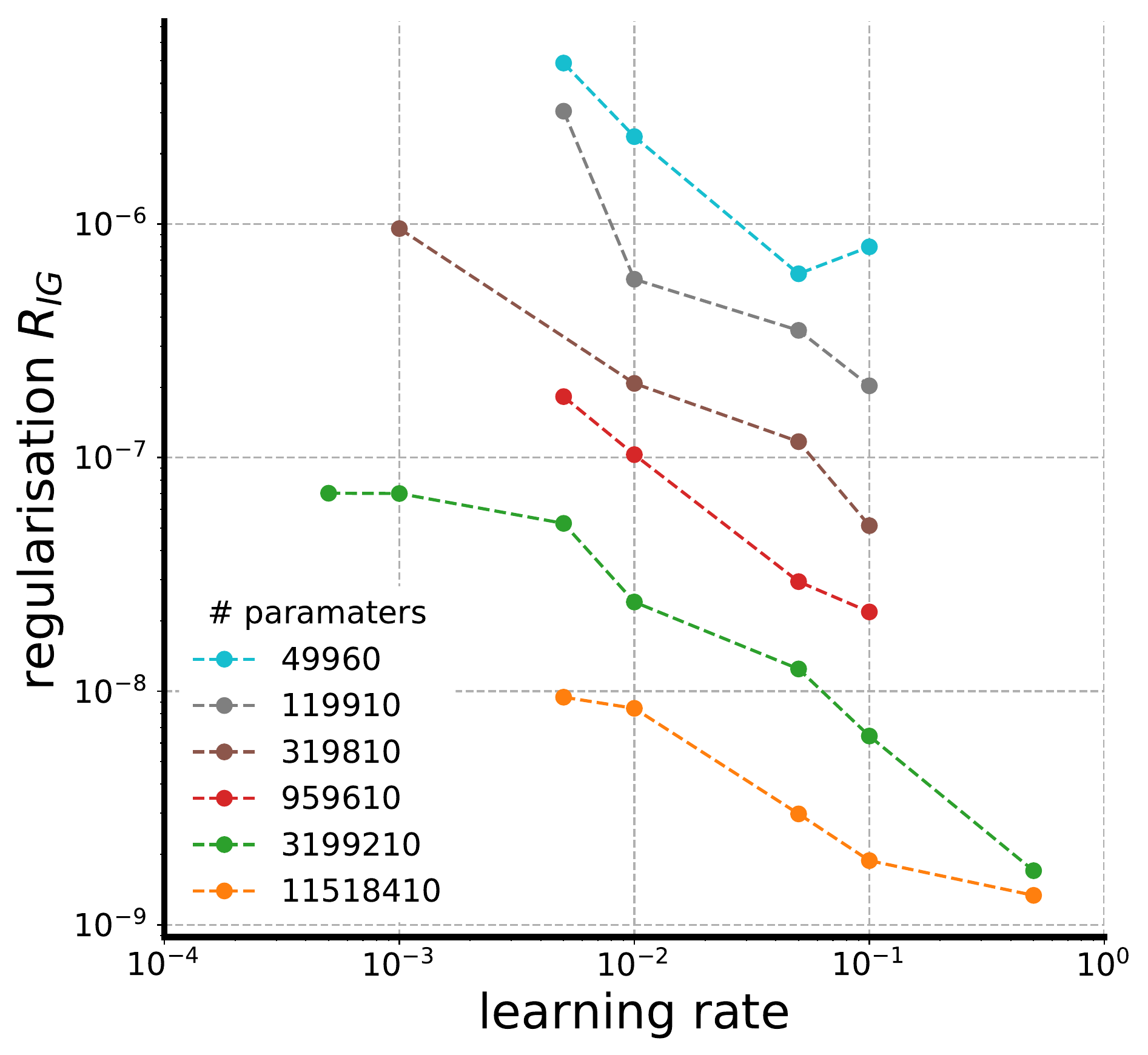}
  \includegraphics[width=0.32\linewidth]{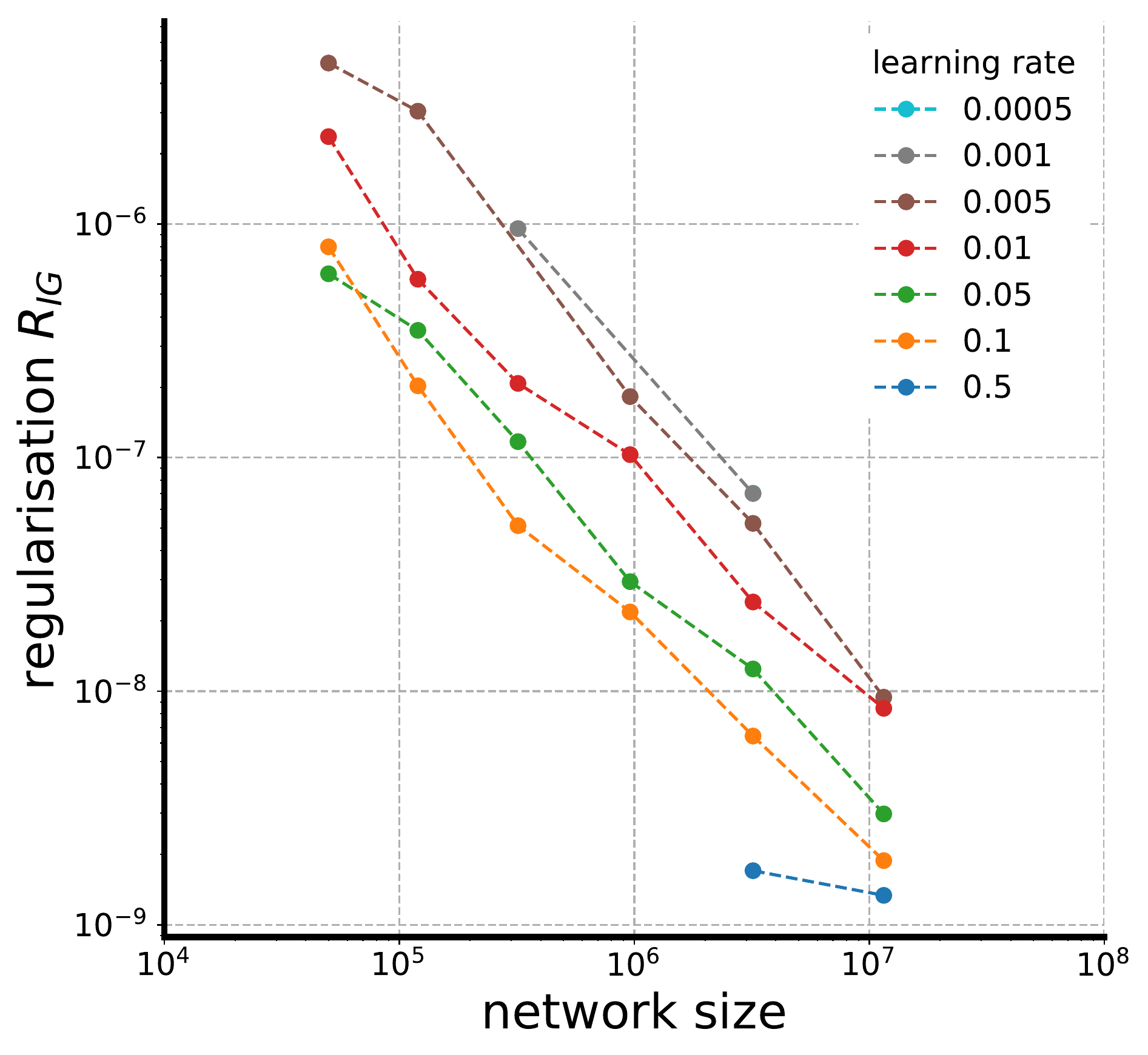}
  \includegraphics[width=0.32\linewidth]{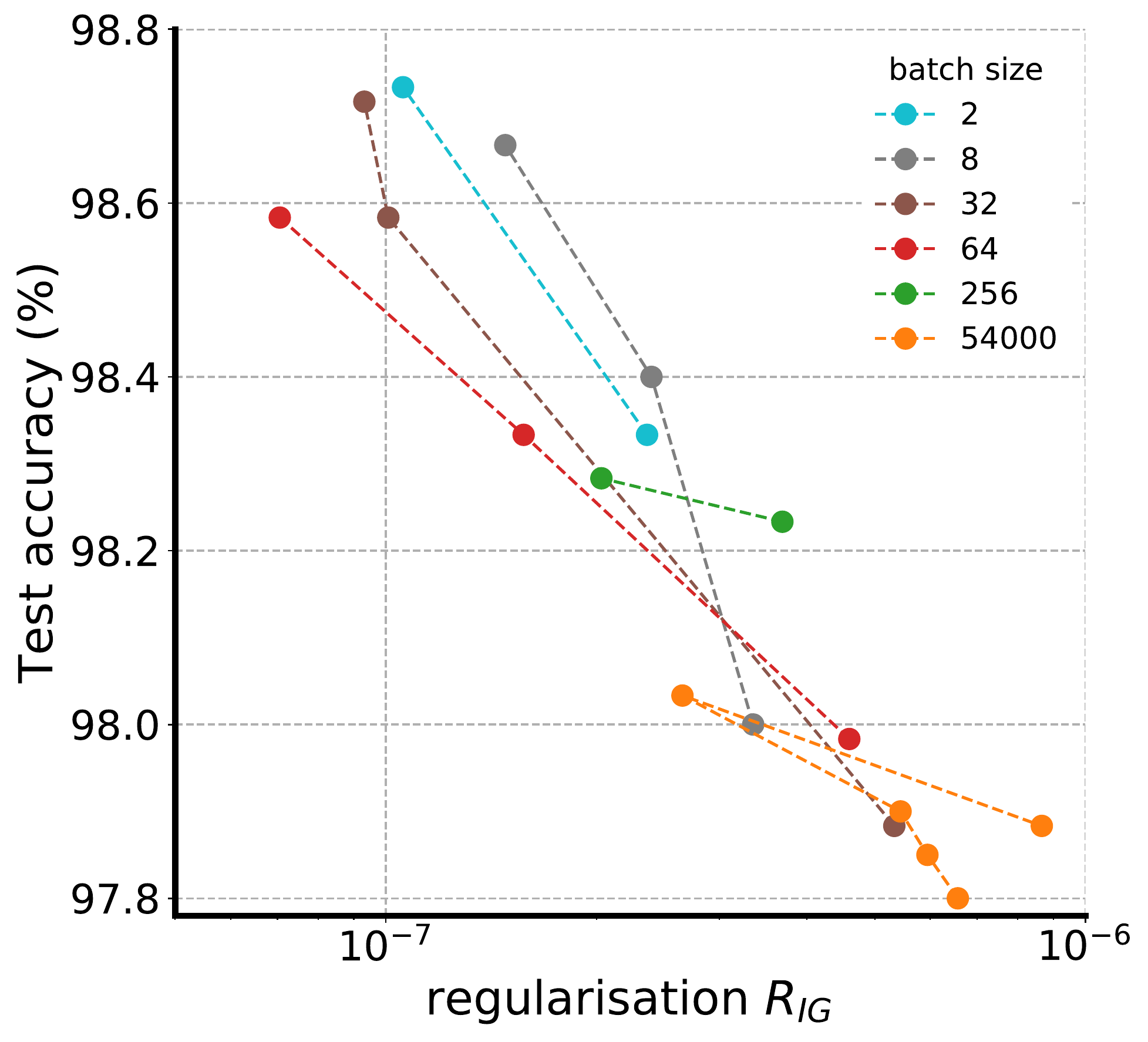}  
  \break  
  \includegraphics[width=0.32\linewidth]{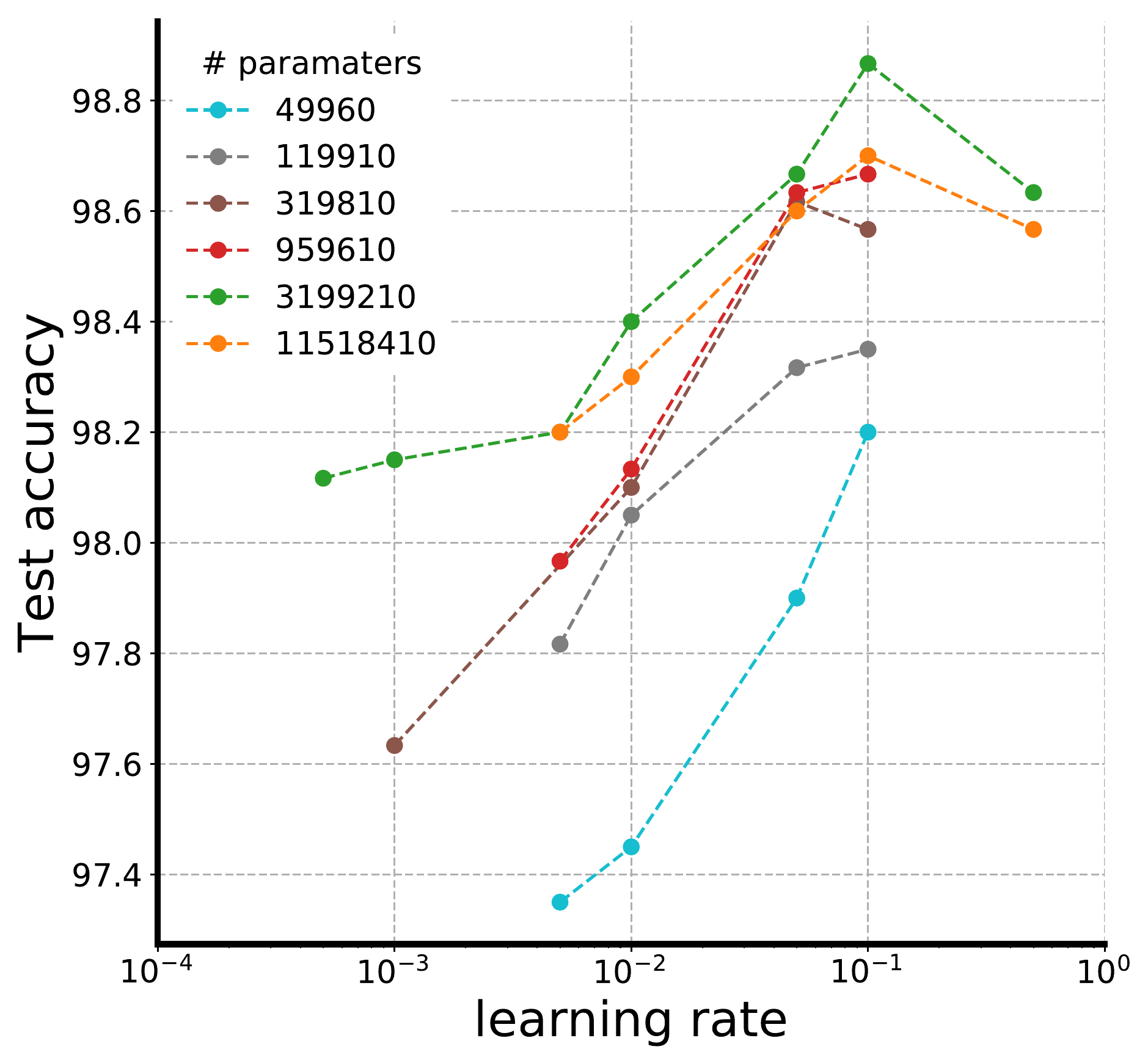}    
  \includegraphics[width=0.32\linewidth]{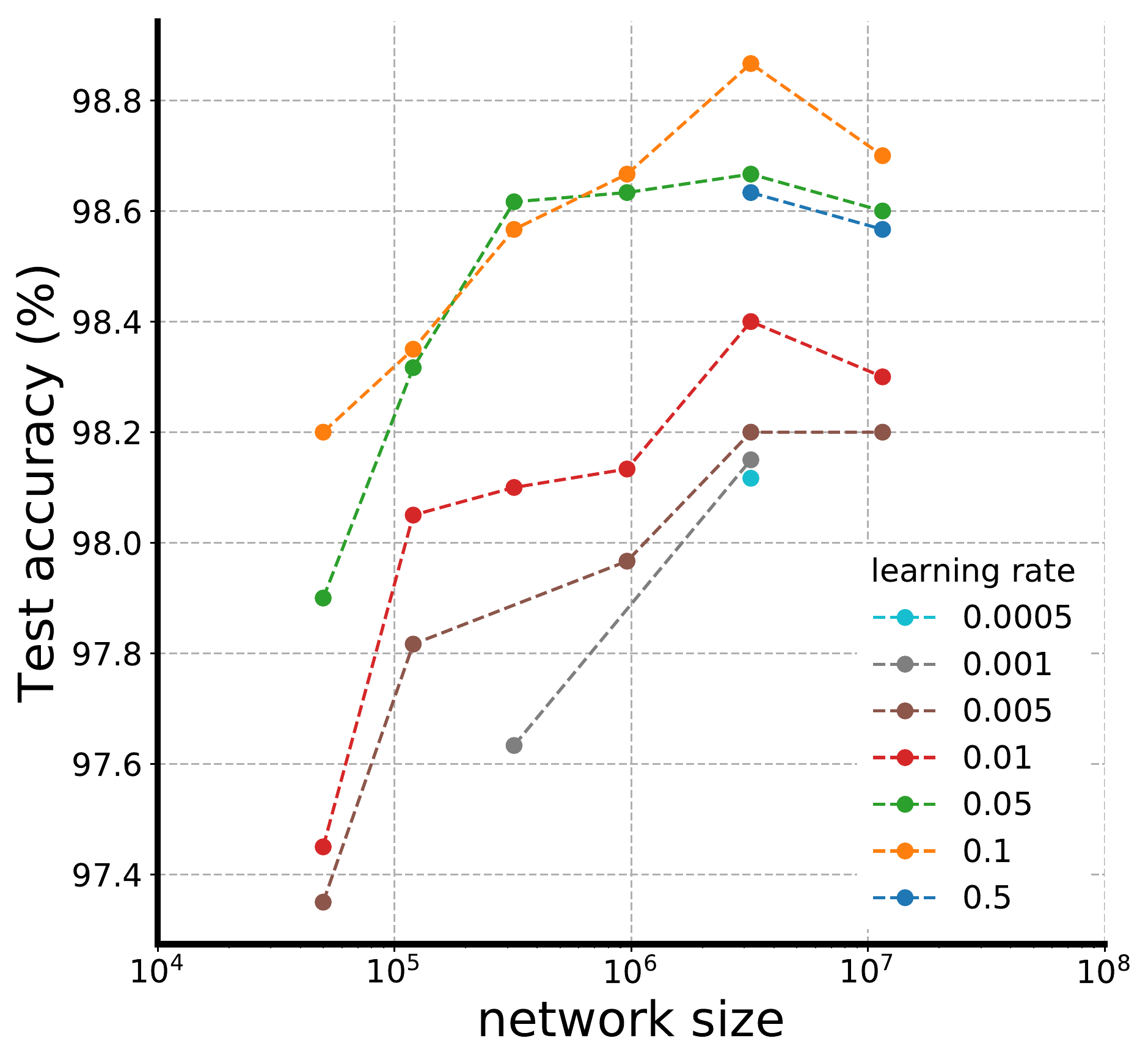}    
  \includegraphics[width=0.32\linewidth]{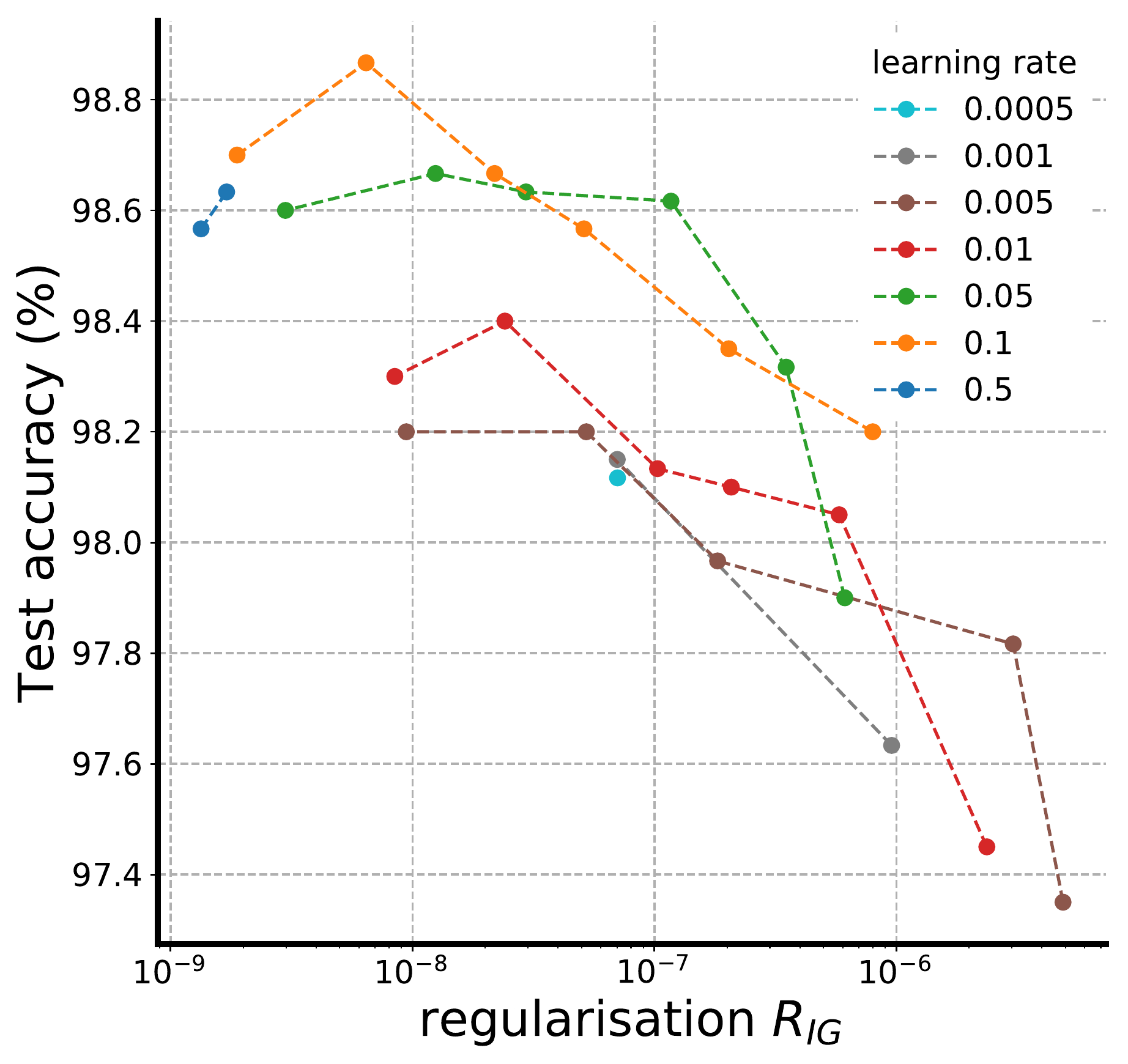}    
  \caption{The relationship between learning rate, network size, $R_{IG}$ and test accuracy, using the same data as in Figure \ref{fig:Regularization constant}. We see that $R_{IG}$ typically decreases as the learning rate increases, or, as the network size increases. We also see that test accuracy typically increases as the learning rate increases, or, as the network size increases. Finally, we observe that test error is strongly correlated with the size of $R_{IG}$ after training.}
  \label{fig:S learning_rate_network_size}
\end{figure}

We calculate the size of implicit gradient regularization $R_{IG}(\theta)$, during model training, using Equation \ref{regterm}.  We observe that $R_{IG}(\theta)$, the loss $E(\theta)$ and the ratio $R_{IG}/E(\theta)$ all decrease as training progresses, for all learning rates considered (Figure~\ref{fig:S Training curves iteration time}). We also observe that the parameter magnitudes grow during training, and this growth slows as $R_{IG}(\theta)$ becomes small, in agreement with Proposition \ref{prop:norm_bound}.  After a sufficiently large fixed number of training steps, we see that models with larger learning rates have much smaller values of $R_{IG}(\theta)$ relative to $E(\theta)$, which appears to be consistent with Prediction \ref{con:IGR_rate}. However, the speed of learning clearly depends on the learning rate $h$ so it may not be reasonable to compare models after a fixed number of training iterations. Instead of stopping after a fixed number of iterations, we could stop training after $n = T/h$ iterations, where $T$ is the fixed physical time that naturally occurs in our backward analysis (Equation \ref{physical_time}). Again, we find that models with larger learning rates have lower values of $R_{IG}(\theta)$ and $E(\theta)$ after a sufficiently large amount of physical training time $T$ (Figure \ref{fig:S Training curves physical time}). However, even for fixed physical time comparisons, we still need to choose an arbitrary physical time point $T$ for making comparisons between models. The choice of stopping time is effectively an unavoidable form of implicit regularization. Instead of fixed iteration time or fixed physical time, we use the time of maximum test accuracy as the stopping time for model comparison in Figure \ref{fig:Regularization constant}, \ref{fig:robustness} and \ref{fig:S learning_rate_network_size}. We choose this option because it is the most useful time point for most real-world applications. For each model, we calculate $E(\theta)$, $R_{IG}(\theta)$ and the test accuracy at the time of maximum test accuracy (which will be a different iteration time for each model) (Figure \ref{fig:S Test curves}). The observation that (i) fixed iteration stopping time, (ii) fixed physical stopping time, and (iii) maximum test accuracy stopping time all have smaller values of $R_{IG}(\theta)/E(\theta)$ for larger values of $\lambda$, consistent with Prediction \ref{con:IGR_rate}, indicates that the relationships between these quantities cannot be trivially explained to be a consequence of a particular choice stopping time regularization. In these examples, we use $n_l=400$ (corresponding to $\sim9.6 \times 10^6$ parameters) with batch size 32.

In Figure \ref{fig:Regularization constant} and Figure \ref{fig:S learning_rate_network_size} we report $R_{IG}(\theta)$ and test accuracy at the time of maximum test accuracy for a selection of networks of different size, trained with different learning rates. For models with sufficient capacity to solve the training task and simultaneously minimize $R_{IG}(\theta)$, we expect  $R_{IG}(\theta)/E$ and test error to decrease as $\lambda$ increases (Prediction \ref{con:IGR_rate} and \ref{con:test_error}). To expose this behaviour, we exclude models that fail to reach $100\%$ MNIST training accuracy, such as models that diverge (in the large learning rate regime), and models with excessively small learning rates, which fail to solve the task, even after long training periods. We observe that test error is strongly correlated with the size of $R_{IG}$ after training (Figure \ref{fig:S learning_rate_network_size}). We also confirm this for a range of batch sizes, including full batch gradient descent (Figure \ref{fig:S learning_rate_network_size}, top right) with $n_l = 400$, and for SGD with batch size 32 across a range of learning rates and network sizes (Figure \ref{fig:S learning_rate_network_size}, bottom right).

Finally, to explore IGR and EGR for a larger model we trained a ResNet-18 to classify CIFAR-10 images using Haiku \citep{haiku2020github}. We used stochastic gradient descent for the training with a batch size of 512 for a range of learning rates $l\in\{0.005, 0.01, 0.05, 0.1, 0.2\}$. We observe the same behaviour as in the MNIST experiments: as the learning rate increases, the values of $R_{IG}$ decrease (Prediction \ref{con:IGR_rate}), the test accuracy increases (Prediction \ref{con:test_error}) and the optimization paths follow shallower slopes leading to broader minima (Prediction \ref{con:IGR_flat_minima}). The experimental results are summarized by the training curves displayed in Figure \ref{fig:S Cifar Training curves} and in Figure \ref{fig:S Cifar plots}, where we plot the relation between learning rate, $R_{IG}$, and test accuracy taken at the time of maximum test accuracy for each training curve. 

\begin{figure}[ht]
  \centering
  \includegraphics[width=0.99\linewidth]{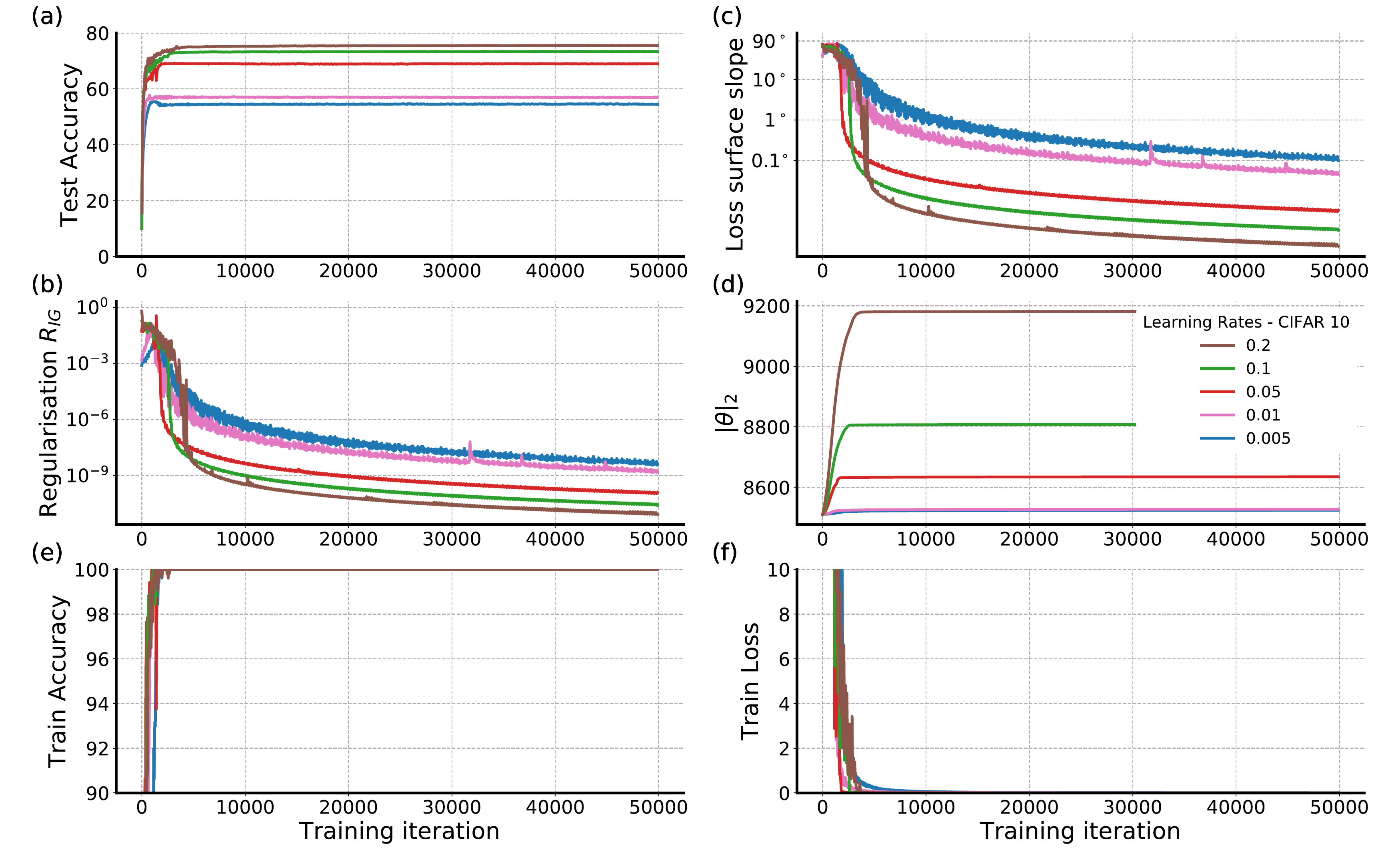}
  \caption{\N{Implicit gradient regularization for Resnet-18 trained on CIFAR-10: (a) The test accuracy increases and (b) the regularization value $R_{IG}$ decreases as the learning rate increases. (c) The optimization paths follow shallower loss surface slopes as the learning rate increases. (d) We also report the L2 norm of the model parameters. (e) and (f) The interpolation regime is reached after a few thousand iterations.}}
  \label{fig:S Cifar Training curves}
\end{figure}

\begin{figure}[ht]
  \centering
  \includegraphics[width=0.95\linewidth]{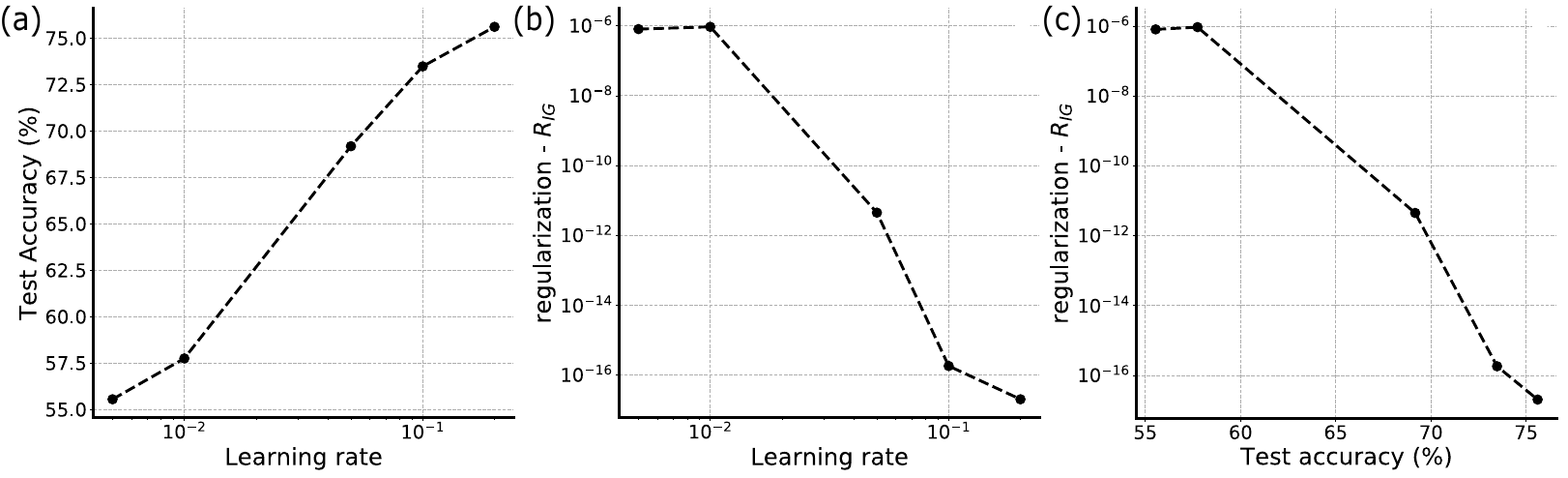}
  \caption{\N{The relationship between learning rate, $R_{IG}$, and test accuracy, using the same data as in Figure \ref{fig:S Cifar Training curves} for various Resnet-18 models trained on CIFAR-10. We see that learning $R_{IG}$ typically decreases and the test accuracy increases as the learning rate increases. Finally, we also observe that test accuracy is strongly correlated with the size of $R_{IG}$. All the values are taken at the time of maximum test accuracy.}}
  \label{fig:S Cifar plots}
\end{figure}
\end{document}